%% file: main.tex
\documentclass{article} % For LaTeX2e
\usepackage{iclr2026_conference,times}

\input{math_commands.tex}

\usepackage{hyperref}
\usepackage{url}
\usepackage{graphicx}
\usepackage{xcolor}

\usepackage{amsfonts}
\usepackage{amsmath}
\usepackage{wrapfig}
\usepackage{algorithm}
\usepackage{algorithmic}
\usepackage{multirow}
\usepackage{booktabs}
\usepackage{subcaption}
\usepackage{amsthm}
\usepackage{enumitem}

\newtheorem{theorem}{Theorem}[section]

\newtheorem{lemma}[theorem]{Lemma}

\newcommand{\algo}{FlowCast}

\title{FlowCast: Trajectory Forecasting for Scalable Zero-Cost Speculative Flow Matching}

\author{
\textbf{Divya Jyoti Bajpai}\textsuperscript{1}\thanks{Work done as part of internship at Adobe Research India.},
\textbf{Shubham Agarwal}\textsuperscript{2}\thanks{Equal contribution, listed alphabetically.},
\textbf{Apoorv Saxena}\textsuperscript{3}\footnotemark[2],
\textbf{Kuldeep Kulkarni}\textsuperscript{4},\\
\textbf{Subrata Mitra}\textsuperscript{4},
\textbf{Manjesh K. Hanawal}\textsuperscript{1}
\\[0.5em]
\textsuperscript{1}Indian Institute of Technology Bombay \\
\textsuperscript{2}University of California, Berkeley \\
\textsuperscript{3}Inception Labs \\
\textsuperscript{4}Adobe Research India
}

\newif\ifcomments
\commentstrue   % set to \commentsfalse to hide all comments

\iclrfinalcopy % Uncomment for camera-ready version, but NOT for submission.
\begin{document}

\maketitle

\input{chapters/abstract}

\input{chapters/introduction}

\input{chapters/related_works}

\input{chapters/methodology}

\input{chapters/experiments}

\input{chapters/conclusion}

\input{chapters/ethics_repro}
% \newpage
\bibliography{iclr2026_conference}
\bibliographystyle{iclr2026_conference}

% \newpage

\appendix

\input{chapters/appendix}

\end{document}

%% file: math_commands.tex
\usepackage{amsmath,amsfonts,bm}

\def\eqref#1{equation~\ref{#1}}
\def\1{\bm{1}}

\DeclareMathAlphabet{\mathsfit}{\encodingdefault}{\sfdefault}{m}{sl}
\SetMathAlphabet{\mathsfit}{bold}{\encodingdefault}{\sfdefault}{bx}{n}

%% file: chapters/abstract.tex
\begin{abstract}

% Flow Matching (FM) has recently emerged as a powerful approach for high-quality visual generation. Yet, inference in FM remains prohibitively slow due to a large number of denoising steps for high-fidelity outputs, which makes real-time or interactive use impractical. Existing acceleration methods—distillation, truncation, or consistency training—either degrade quality, incur costly retraining, or lack generalization. We propose FlowCast, a training-free speculative generation framework that accelerates flow matching by extrapolating future states through constant-velocity forecasting, treating them as zero-cost drafts that are validated with a lightweight mean-squared error check. This constant-velocity forecasting allows redundant steps in stable regions to be aggressively skipped in a parallel fashion while retaining precision in complex ones. FlowCast is a plug-and-play framework that integrates seamlessly with any FM model, requires no auxiliary networks, and is backed by rigorous theoretical guarantees that bound the worst-case deviation between speculative and full FM trajectories. Empirically, \algo{} achieves $>2.5\times$ speedups in image, video, and editing tasks, outperforming existing baselines with no quality loss.

%\sm{what is "lack transferability" ?}

%\textcolor{blue}{Divya: It means generalization, changed now.}

Flow Matching (FM) has recently emerged as a powerful approach for high-quality visual generation. However, their prohibitively slow inference due to a large number of denoising steps limits their potential use in real-time or interactive applications. Existing acceleration methods, like distillation, truncation, or consistency training, either degrade quality, incur costly retraining, or lack generalization. We propose FlowCast, a training-free speculative generation framework that accelerates inference by exploiting  the fact that FM models are trained to preserve constant velocity. FlowCast speculates future velocity by extrapolating current velocity without incurring additional time cost, and accepts it if it is within a mean-squared error threshold. This constant-velocity forecasting allows redundant steps in stable regions to be aggressively skipped while retaining precision in complex ones. FlowCast is a plug-and-play framework that integrates seamlessly with any FM model and requires no auxiliary networks. We also present a theoretical analysis and bound the worst-case deviation between speculative and full FM trajectories. Empirical evaluations demonstrate that FlowCast achieves $>2.5\times$ speedup in image generation, video generation, and editing tasks, outperforming existing baselines with no quality loss as compared to standard full generation.

\end{abstract}

%% file: chapters/introduction.tex
\section{Introduction}

\textbf{Flow Matching (FM)} \cite{lipman2022flow} has recently emerged as a powerful and principled approach for generative modeling \cite{bond2021deep}. By learning a time-dependent \textit{velocity field} that continuously transforms samples from a base distribution to a target distribution, FM models generate data by integrating an \textbf{Ordinary Differential Equation (ODE)}. This framework offers advantages over diffusion-based methods \cite{croitoru2023diffusion}, including faster convergence, stable training, and direct control over sample trajectories.

However, despite these strengths, \textbf{inference in FM models is costly and is a bottleneck} \cite{kornilov2024optimal}. ODE integration is inherently sequential—each step depends on the output of the previous one—making the process slow and difficult to parallelize. For high-fidelity generation, a large number of fine-grained steps are needed, pushing the latency cost higher, which makes it difficult for real-time or interactive use. As data dimensionality and model complexity increase, so does the inference burden. This issue is even more pronounced in video generation \cite{kong2024hunyuanvideo}, where models operate on high-dimensional spatio-temporal data. The sequential ODE integration compounds across frames, making efficient inference essential for long videos \cite{liu2025timestep}.

To reduce inference latency, static reduction strategies such as \textit{distillation} \cite{luhman2021knowledge, yan2024perflow, kornilov2024optimal}, \textit{trajectory truncation} \cite{dhariwal2021diffusion, lu2022dpm, liu2025timestep}, and \textit{consistency training} \cite{yang2024consistency, zhang2025inverse, dao2025self} have been proposed. However, these approaches often degrade output quality—resulting in blurry textures, semantic drift, or broken spatial coherence. 
In the video domain, such approaches often cause temporal flickering and motion inconsistency, which severely degrade realism despite visually plausible individual frames. Furthermore, many of these methods require retraining the model with auxiliary objectives or handcrafted losses, which not only increases the engineering burden but also restricts transferability across model families. As a result, these approaches are less modular, harder to scale, and often infeasible to deploy in time-sensitive or interactive pipelines.

A promising line of work to accelerate inference in language models is \textbf{Speculative Decoding (SD)} \cite{xia2024unlocking}, where multiple drafts are proposed in parallel and selectively verified to reduce latency. However, this idea does not directly extend image and video generative models such as Flow Matching (FM). Some of the unique challenges are: (1) Unlike in language models, no readily available draft model exists. 
Then, how to design such a draft model whose trajectories closely couple with the full model? 
(2) Performance of FM is highly sensitive to velocity output. Even a small perturbation in velocity can accumulate along the integration path, leading to severe generative errors. Then how to evaluate the draft model velocity and control the trajectory drifts? We address these challenges using properties of FM trajectories. 

\textbf{Our Key Insight.} We observe that FM trajectories are typically \textit{locally smooth and slowly varying} \cite{lipman2022flow}, especially in well-trained models. Leveraging this property, we propose a speculative framework \algo{} where the model’s own past velocity predictions act as a \textit{zero-cost draft} for future steps. These drafts are selectively validated using a mean-squared error check: if accurate within the limits of a threshold, we skip recomputation entirely; if not, we fall back to the backbone. This mechanism introduces negligible overhead, requires no retraining or auxiliary models, and remains fully compatible with existing FM pipelines.

\textbf{Why it matters.} Unlike previous acceleration techniques that require retraining or trade off generation fidelity, our method is \textit{plug-and-play}, %adapting online to the complexity of the trajectory and leveraging the inherent smoothness of FM. 
adapts to the complexity of the trajectories, and leverages the inherent smoothness of the FM. 
In regions with stable dynamics, it skips steps aggressively to accelerate inference, while in complex regions it falls back to fine-grained integration for accuracy. This design enables significant speedups \textbf{without compromising output quality}. A key advantage is that the approach is entirely training-free and readily applicable to existing methods, allowing even current acceleration strategies to be further enhanced.

\algo{} is task-agnostic: it applies seamlessly to image generation, image editing, and video generation, where it ensures faithful, high-quality edits. This broad applicability highlights speculative generation as a unified framework for accelerating generative models. We compare the trajectories of the FlowCast and those obtained from the full ODE to develop a theoretical bound on their deviations. The resulting error bound guarantees the stability and fidelity of our method, ensuring that speculative generation remains close to the original FM solution while enabling faster inference.

In summary, our major contributions are:
\begin{itemize}[leftmargin=*, itemsep=0pt, topsep=0pt, parsep=0pt]
    \item We introduce a speculative generation framework viz. \algo{} for flow-matching-based generative models that enables adaptive, partially parallel inference.
    \item \algo{} uses zero-cost drafts from the model's own velocity predictions, avoiding retraining and preserving trajectory fidelity.
    \item We derive a theoretical error bound that formally characterizes how much the speculative trajectory can deviate from full ODE integration.
   % \item We provide a theoretical analysis to bound the deviation of the speculative generations from the generations of the full ODE integration. 
    \item Our method achieves substantial inference acceleration ($>2.5 \times$) while maintaining sample quality, outperforming existing baselines—especially in multi-turn editing tasks.
\end{itemize}

%% file: chapters/related_works.tex
\section{Related works}
Flow Matching \cite{lipman2022flow, dao2023flow, labs2025flux, deng2025emerging} has recently emerged as a strong alternative to diffusion models \cite{croitoru2023diffusion, xing2024survey, yang2023diffusion, zhu2023conditional}, providing a deterministic mapping between noise and data. This determinism makes flow-based models appealing for applications such as image inversion \cite{deng2024fireflow}, editing \cite{wang2024taming}, and video generation \cite{kong2024hunyuanvideo, wan2025wan}, as it reduces randomness and enables faster sampling with fewer neural function evaluations.

Multimodal extensions like FlowTok \cite{he2025flowtok} compress text and images into shared token spaces for faster inference, and FLUX.1 \cite{labs2025flux} shows large-scale flows can rival diffusion models at lower cost. For video, Pyramidal Flow Matching \cite{jin2024pyramidal} reduces compute via hierarchical generation.

However, iterative sampling remains a major bottleneck in these models, restricting real-time and interactive deployment.
To accelerate inference, prior work has focused on retraining-based solutions \cite{lee2023minimizing, bartosh2024neural}. Distillation methods \cite{song2023consistency, luhman2021knowledge, liu2022flow, kornilov2024optimal, salimans2022progressive} like InstaFlow \cite{liu2023instaflow}, LeDiFlow \cite{zwick2025lediflow} and Diff2Flow \cite{schusterbauer2025diff2flow} transfer knowledge from diffusion priors for one- or few-step sampling, while PeRFlow \cite{yan2024perflow} straightens trajectories with piecewise rectified flows. Methods used for sampling modification \cite{dhariwal2021diffusion, lu2022dpm, shaul2023bespoke} such as TeaCache \cite{liu2025timestep} skip redundant steps by getting informed by the timestep-embedding, and consistency-based approaches \cite{yang2024consistency, zhang2025inverse, haber2025iterative, dao2025self} combine adversarial and consistency losses for high-quality few-step generation.

%In contrast, our speculative decoding framework provides a novel acceleration strategy by trading off compute with speedup.
Our work accelerate the inference process using the speculative decoding framework making the FM more suitable for real-time and interactive applications.  
%Existing approaches rely on retraining or distillation, compressing the sampling process into a few learned steps at the cost of degraded fidelity. 
In contrast to the existing methods, our method do not require any retraining, distillation, trajectory straightening, or step skipping. 
%The speedup is achieved through  additional compute which is performed paralley. 
Our method offers a fully plug-and-play solution for the existing flow models. It operates dynamically at inference time by speculating future steps and verifying them on the fly. This design delivers substantial speedup, while preserving the model’s original generative quality.

%% file: chapters/methodology.tex
\section{Setup}
In this section, we provide the foundational formulation of Flow Matching (FM) and Speculative Decoding (SD) frameworks.

\subsection{Speculative Decoding Framework}
SD is a parallelizable inference strategy designed to accelerate autoregressive generation, particularly in large language models (LLMs), without retraining or sacrificing output quality. The framework leverages the observation that evaluating a sequence of tokens in parallel is only marginally slower than evaluating a single token, enabling partial parallelism during decoding.

Let \( \mathcal{M}_p \) denote the target (slow but accurate) model, and \( \mathcal{M}_q \) denote a draft model (faster but less accurate). 
%The objective is to accelerate generation from \( \mathcal{M}_p \) by using \( \mathcal{M}_q \) to generate speculative drafts and verifying them efficiently.
In SD, the objective is to accelerate token generations from \( \mathcal{M}_p \) by verifying the speculative drafts generated by \( \mathcal{M}_q \). The verification can be done at a higher speed than the generation, thus achieving speedup.  Given an input with $i$ tokens \( x_{1:i} = (x_1, x_2, \dots, x_i) \), SD has two phases:

\textbf{1) Drafting Phase:} The draft model \( \mathcal{M}_q \) generates \( K \) (speculative) tokens autoregressively:
        $(x_{i+1}^{\text{draft}}, x_{i+2}^{\text{draft}}, \dots, x_{i+K}^{\text{draft}}) \sim \mathcal{M}_q(\cdot \mid x_{1:i}).$  This phase is fast due to the small size and shallow architecture of \( \mathcal{M}_q \).

\textbf{2) Verification Phase:}
         The full model \( \mathcal{M}_p \) evaluates the entire speculative sequence in a single forward pass, computing: $\mathcal{M}_p(x_{i+k} \mid x_{1:i+k-1}^{\text{draft}})$, for $k = 1, \dots, K.$ These predicted distributions are compared against the draft tokens to assess validity (e.g., via cross-entropy or top-k agreement). Let \( j \) be the first index where \( \mathcal{M}_p \)'s prediction deviates significantly from the draft. Then the prefix \( x_{i+1}^{\text{draft}}, \dots, x_{i+j-1}^{\text{draft}} \) is accepted. Accepted tokens are then passed again to the draft model to generate new drafts based on updated context.

This speculative verification cycle continues iteratively until a full sequence is produced. The method improves throughput without modifying \( \mathcal{M}_p \), and is particularly effective when the draft model’s outputs closely align with the target model.

Under the assumption that \( \mathcal{M}_q\) is sufficiently aligned with \( \mathcal{M}_p \), the speculative decoding process preserves output quality while accelerating generation. Moreover, this approach avoids retraining and can be applied to any pair \( (\mathcal{M}_q, \mathcal{M}_p) \) where \( \mathcal{M}_q \) is reasonably well-calibrated.

Adapting speculative decoding to image generation is nontrivial, as drafting valid future states in continuous, high-dimensional latent spaces is far more complex than next-token prediction in language models. Drafts must maintain spatial and temporal consistency in predicted noise or velocity, which is difficult without a trained surrogate. Moreover, small trajectory errors can accumulate under ODE/SDE integration, causing semantic drift, structural distortions, or misalignment with conditioning. These compounding effects make speculative decoding fragile in visual domains, necessitating drafting and verification strategies that explicitly respect the geometry and stability of the underlying dynamics.

\subsection{Flow Matching Overview}\label{sec:flow_matching}
Let \( \pi_0 \) and \( \pi_1 \) be two smooth probability densities over \( \mathbb{R}^d \), representing the source and target distributions, respectively. The goal of \textbf{Flow Matching (FM)} is to construct a deterministic and continuous mapping that transports samples from \( \pi_0 \) to \( \pi_1 \) through a time-indexed trajectory governed by an ordinary differential equation (ODE). Specifically, FM defines the transformation using a time-dependent velocity field \( v : \mathbb{R}^d \times [0,1] \to \mathbb{R}^d \), yielding the following initial value problem:
\begin{equation}
\frac{dx_t}{dt} = v(x_t, t), \quad x_0 \sim \pi_0, \quad t \in [0, 1].
\end{equation}
Here, \( x_t \in \mathbb{R}^d \) denotes the state of the sample at time \( t \), and \( v(x_t,t) \) is typically parameterized by a neural network trained to align the marginal density at \( t = 1 \), denoted \( \pi_1 \), with the target distribution. The ODE defines a flow map \( \Phi_t(x_0) \) that moves points in \( \mathbb{R}^d \) along continuous trajectories, with \( \Phi_1(x_0) \sim \pi_1 \). Learning a valid \( v(x_t,t) \) that realizes this transport is the central objective in FM.

\textbf{Numerical Integration:}
In practice, the solution \( x_t \) of the ODE must be computed numerically, as closed-form solutions are generally unavailable for learned velocity fields. The most widely used method for solving the ODE in flow matching models is the \textbf{forward Euler method}, due to its simplicity and computational efficiency. The time interval \( [0,1] \) is discretized into a sequence of $K$ points \( \{ t_0 = 0, t_1, \dots, t_K = 1 \} \), which may be non-uniformly spaced. Starting from \( x_0 \sim \pi_0 \), the trajectory is updated iteratively as
\begin{equation}
x_{k+1} = x_k + \Delta t_k \cdot v(x_k, t_k), \quad \Delta t_k = t_{k+1} - t_k.
\end{equation}
This discretization approximates the continuous flow by a finite sequence of steps, with each update advancing the state in the direction prescribed by the velocity field. The Euler method is favoured in most implementations for its low computational overhead \cite{deng2025emerging, labs2025flux}.

\textbf{Trajectory Smoothness:}
The validity and accuracy of Euler-based discretization depend on the regularity of the velocity field \( v(x_t, t) \). In particular, when the ODE admits a unique solution, the discretization errors can be bounded. More importantly, empirical observations in trained FM models reveal that the velocity field tends to vary gradually across time steps and sample locations \cite{liu2025timestep}, particularly when trained using interpolants or synthetic velocity targets, indicating that the velocity does not fluctuate sharply between adjacent steps. 

% This local continuity implies that \( v(x_k, t_k) \) often provides a good approximation to \( v(x_{k+1}, t_{k+1}) \), a property we exploit in our speculative decoding framework. By leveraging this regularity, we can reuse previous velocity estimates as "drafts" for future steps, reducing redundant computation while preserving alignment with the true trajectory.

\section{Speculative Decoding for Flow models}\label{sec:SpecDec}
In this section, we present our core contributions: establishing a connection between speculative decoding and flow matching models, introducing our speculative generation framework, and providing theoretical error bounds that formally characterize its fidelity.

\textbf{Autoregressive nature of FM:} SD works well for the autoregressive models. FM process is inherently autoregressive as each denoising step depends entirely on the output of the previous step, creating a strict sequential dependency chain—much like generating tokens one by one in a language model. Though, SD is used in language tasks where the autoregression is in the discrete space (e.g., tokens from a discrete set), they can also be leveraged in FMs where autoregression is in the continuous space (e.g., velocity in a bounded interval).

\textbf{Zero-Cost Draft Construction:}
To enable speculative decoding in FM models without relying on a separate draft network, we propose a lightweight and self-contained strategy for draft generation by directly reusing the velocity predictions from the target model itself.

Recall from Section~\ref{sec:flow_matching} that FM models learn a time-dependent velocity field \( v(x_t, t) \) that transports samples from the source distribution \( \pi_0 \) toward the target distribution \( \pi_1 \) via integration of the ODE:
\begin{equation}
    \frac{d x_t}{dt} = v(x_t, t), \quad x_0 \sim \pi_0.
\end{equation}
During training, the model $v$ is typically supervised to match a reference velocity (often constant over time) along interpolated paths. This encourages the model \( v \) to vary smoothly in both space and time. Leveraging this smoothness, we propose \algo{}, where the velocity is kept constant to extrapolate not just the immediate next state, but the entire remaining trajectory, resulting in zero-cost draft trajectories. Each step of the trajectories is verified by the full model in parallel, and either accept them or apply corrections. 
%\subsection{Algorithm \algo{}}
\algo{} has three components: Drafting, Verification, and Correction. Its pseudo-code is given in Algorithm \ref{alg:algorithm}. 

\begin{wrapfigure}{l}{0.55\textwidth} % r = right, width of the box
\vspace{-0.75cm}
\centering
\begin{minipage}{0.55\textwidth} % same width as wrapfigure
\begin{algorithm}[H]
\caption{FlowCast}
\label{alg:algorithm}
\begin{algorithmic}[1]
\REQUIRE Initial state $x_{t_0}$, timesteps $\{t_k\}_{k=0}^K$, threshold $\epsilon$, model $v$
\STATE Compute initial velocity: $v_{t_0}\leftarrow v(x_{t_0}, t_0)$
\STATE Predict next state: ${x}_{t_1} \leftarrow x_{t_0} + (t_1 - t_0) v_{t_0}$
\STATE Set current index $m \gets 0$
\REPEAT
    \STATE \textbf{(Drafting)}: Generate speculative states:
     $$\tilde{x}_{t_{k+1}} \leftarrow x_{t_m} + (t_k - t_m) v_{t_m} \;\forall k= m+1,\ldots, K$$
    \STATE \textbf{(Parallel Verification)}: For each $k = m+1, \ldots, K$, compute:
    $$v_{t_k} \leftarrow v(\tilde{x}_{t_k}, t_k), \quad
    e_k \leftarrow \text{MSE}(v_{t_m}, v_{t_k})$$
    \STATE Find first index $j > m$ such that $e_j > \epsilon$
    \IF{no rejection (i.e., $e_k \leq \epsilon$ for all $k$)}
        \STATE Accept full draft $\{\tilde{x}_{t_k}\}_{k=m+1}^K$, set $m \gets K$
    \ELSE 
        \STATE \textbf{(Correction)}: Accept draft $\{\tilde{x}_{t_k}\}_{k=m+1}^{j-1}$
        \STATE ${x}_{t_j}\gets \tilde{x}_{t_{j-1}}+v_{t_{j-1}}(t_j - t_{j-1}); m \gets j-1$
        \STATE Recompute velocity: $v_{t_m} \leftarrow v(x_{t_m}, t_m)$
    \ENDIF
\UNTIL{$m = K$}
\end{algorithmic}
\end{algorithm}
\end{minipage}
\vspace{-0.5cm}
\end{wrapfigure}
\textbf{Drafting:} 
%Leveraging this smoothness, we propose a zero-cost drafting strategy where the velocity computed at time \( t_i \) is kept constant to extrapolate not just the immediate next state, but the entire remaining trajectory. Specifically, given the current state \( x_t \), we define the speculative trajectory as:
Given the current state \( x_{t_i} \) at step $t_i$, define the speculative trajectory as:
\begin{multline}\label{eq:draft_update}
    \tilde{x}_{t_i + \Delta t_k} = x_{t_i} + v(x_{t_i}, t_i)\cdot \Delta t_k 
\end{multline}
for $k = 1, \dots, n$ and $k>i$. Here \( n \) denotes the number of remaining time steps to be traversed to complete the trajectory, and $\Delta t_k=t_k-t_i$. This effectively reuses the same velocity vector to linearly extrapolate the remaining path.

This method introduces no auxiliary models or additional computational cost during drafting. While simple, it leverages the inductive bias of FM models to generate coherent trajectories, particularly in domains where velocity evolution is smooth.

\textbf{Verification:} We employ a lightweight yet principled mechanism to verify the validity of each drafted state. For every extrapolated point \( \tilde{x}_{t+\Delta t} \), we compare the reused velocity \( v(x_t, t) \) with the model’s own velocity prediction at the extrapolated point and step, i.e., \( v(\tilde{x}_{t+\Delta t}, t+\Delta t) \).
If the mean squared error (MSE) between the two remains below a pre-defined threshold $\epsilon$, the draft is accepted, i.e.,
\begin{equation}
    \text{MSE}(v(x_t, t), \, v(\tilde{x}_{t+\Delta t}, t + \Delta t)) < \epsilon.
\end{equation}
 \textbf{Correction:} If the criterion is violated at a given step, the draft is rejected, and all subsequent drafts are discarded. A new trajectory is then re-initiated starting from the rejection time step, using fresh velocity predictions from the model as per Equation~\ref{eq:draft_update}.

This verification strategy is efficient and directly aligned with the FM formulation, where the velocity field encodes the generative dynamics. Unlike traditional methods that assess validity in data or latent space, our velocity-based criterion operates in function space, making it more sensitive to inconsistencies in local dynamics while remaining inexpensive to compute. 
This ensures the integration path remains faithful to the learned flow without unnecessary recomputation.

The working procedure of \algo{} is illustrated in Figure~\ref{fig:main}. The left part shows the normal generation process, where the FM model is invoked sequentially in each step of the trajectory. The right side shows the \algo{} process for a trajectory generated from the drafting step. All the speculatively generated drafts in the trajectory are processed in parallel through the FM model. In the regular FM generation, the output of the model in each step depends only on the output of the previous step. We use this fact to verify the correctness of the drafts. In particular, if the output of the FM model at step $k$ matches that of the speculated draft at step $k+1$ within the MSE threshold, the speculated draft at $k+1$ is accepted, else rejected.
%Since each output of each model depends only on previous states, we compare the draft at $t_{k+1}$ with the model’s output at $t_{k}$ and if the mean-squared error is below a threshold, the output of the draft at $t_{k+1}$ is accepted; 
 This verification is performed efficiently in a single forward pass. In Figure~\ref{fig:main}, drafts are accepted up to step 3, but the draft at step 4 is rejected. Once a draft is rejected at a step, all later drafts are discarded, and the trajectory is computed from the last accepted step.

\begin{figure*}[!t]
    \centering
    \includegraphics[scale=0.5]{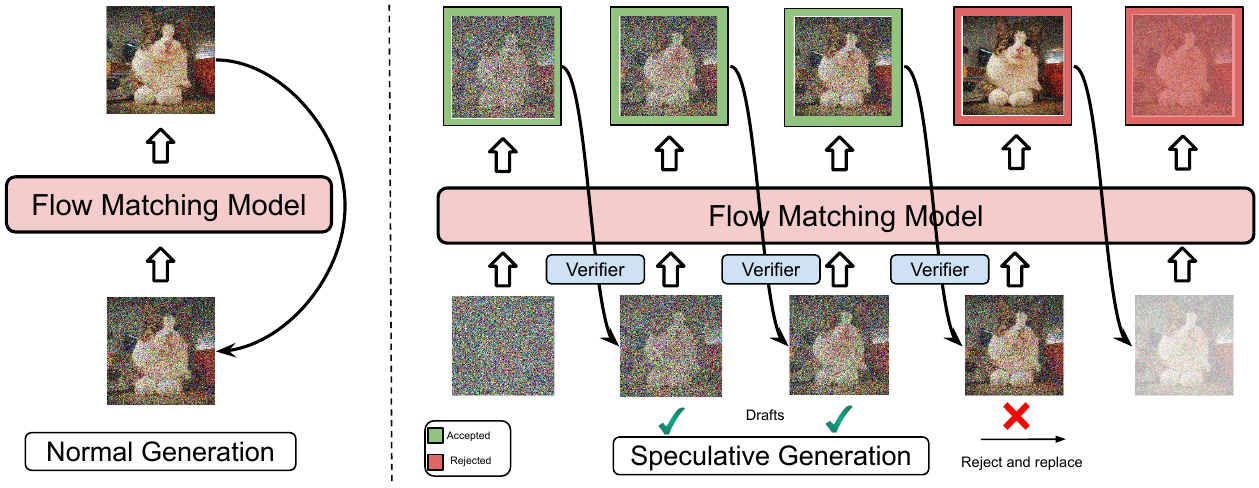}
    \caption{Comparison of normal vs. speculative image generation: The left side is the normal iterative process of generation in FM model. The right side is our approach which introduces intermediate speculative drafts that are rapidly proposed and verified in parallel by the backbone, enabling faster sampling while preserving quality.}
    \label{fig:main}
\end{figure*}

\subsection{Analysis}
In this subsection, we bound the deviations in the draft trajectories and that of the full model. We then provide a bound on the MSE threshold to guarantee the worse case deviations. 
\begin{lemma}\label{thm:lemma}
Let $x(t)$ be the solution of
$
\frac{dx}{dt} = v(x,t), x(0)=x_0,
$
where $v$ is Lipschitz in $x$ with constant $M$. Assume:  1) $\|x''(t)\|\leq N$ for all $t\in[0,1]$,  
2) $\|\tfrac{\partial v}{\partial x}(x,t)\|\leq M$ for all $(x,t)$,  
3) a forward Euler scheme with step size $h$ is run for $k$ steps, where speculative velocities $\tilde v$ are accepted in a fraction $p\in[0,1]$ of the steps and satisfy
$
\|\tilde v(x_k,t_k)-v(x_k,t_k)\| \leq \sqrt{\epsilon}.
$
Then the global discretization error $E_k = \|x(t_k)-x_k\|$ can be bounded as
\[
\|x(t_k)-x_k\| \;\leq\; \frac{e^{Mt_k}-1}{2M}\,\big(hN + 2p\sqrt{\epsilon}\big).
\]
\end{lemma}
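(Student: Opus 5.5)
This is the standard global-error argument for forward Euler, with the speculative velocity treated as a bounded perturbation of the true one. Write the scheme as $x_{k+1} = x_k + h\,u_k$, where $u_k = v(x_k,t_k)$ on exact steps and $u_k = \tilde v(x_k,t_k)$ on accepted speculative steps.

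\textbf{Local truncation error.} Taylor's theorem with remainder, together with assumption 1, gives
\[
x(t_{k+1}) = x(t_k) + h\,v(x(t_k),t_k) + \tau_k, \qquad \|\tau_k\|\le \tfrac{h^2}{2}N .
\]

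\textbf{Error recursion.} Subtracting the scheme from this identity gives
\[
E_{k+1} \le E_k + h\,\|v(x(t_k),t_k)-v(x_k,t_k)\| + h\,\|v(x_k,t_k)-u_k\| + \tfrac{h^2 N}{2}.
\]
The second term is at most $hM E_k$ by assumption 2 (equivalently, the Lipschitz hypothesis). The third term is zero on exact steps and at most $h\sqrt{\epsilon}$ on speculative steps. Hence
\[
E_{k+1}\le (1+hM)E_k + h\,\delta_k + \tfrac{h^2N}{2}, \qquad \delta_k\in\{0,\sqrt\epsilon\}.
\]

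\textbf{Unrolling.} With $E_0=0$,
\[
E_k \le \sum_{j<k}(1+hM)^{k-1-j}\bigl(\tfrac{h^2N}{2}+h\delta_j\bigr).
\]
The truncation part sums geometrically to $\frac{(1+hM)^k-1}{hM}\cdot\frac{h^2N}{2}\le \frac{e^{Mt_k}-1}{2M}\,hN$, using $(1+hM)^k\le e^{Mkh}=e^{Mt_k}$ and $t_k = kh$.

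\textbf{Main obstacle: the speculative term.} This term is $h\sqrt\epsilon\sum_{j\in S}(1+hM)^{k-1-j}$, where $S$ is the set of accepted speculative steps and $|S|=pk$. The difficulty is that the weights grow with $k-1-j$, so the bound depends on where the accepted steps sit. If they are front-loaded, the sum can exceed $p$ times the full geometric sum, and the claimed factor $p$ is not automatic.

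\textbf{How to handle it.} I would interpret ``fraction $p$'' as an averaged, uniformly distributed acceptance. Either acceptances occur at uniform density, or one takes the bound in expectation under a random acceptance pattern with probability $p$ per step. In that case the weighted sum is $p\cdot\frac{(1+hM)^k-1}{hM}$, which yields $\frac{e^{Mt_k}-1}{M}\,p\sqrt\epsilon = \frac{e^{Mt_k}-1}{2M}\cdot 2p\sqrt\epsilon$.

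The factor $2$ inside the bracket absorbs the $\tfrac12$ in front. Equivalently, one can run the same argument with the combined per-step perturbation $\tfrac{hN}{2}+p\sqrt\epsilon$ treated as a uniform effective constant.

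\textbf{Combining.} Adding the two parts gives
\[
E_k\le \frac{e^{Mt_k}-1}{2M}\bigl(hN+2p\sqrt\epsilon\bigr).
\]
I would state the uniform-density or expectation assumption explicitly when presenting this step. Without it, only the weaker bound with $p=1$ holds for worst-case placement of the accepted steps.
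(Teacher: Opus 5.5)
Your skeleton matches the paper's. It bounds the Taylor remainder by $\tfrac{h^2}{2}N$, sets up the recursion $E_{k+1}\le(1+hM)E_k+\cdots$, unrolls geometrically from $E_0=0$, and uses $(1+hM)^k\le e^{Mt_k}$. Your norm form of the remainder is cleaner than the paper's ``there exists $\psi$'' mean-value form, which is not valid for vector-valued $x$.

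The two arguments part ways exactly where you raise your concern. The paper writes ``accounting for the fraction $p$ of such steps, $\|v_\theta(x_k,t_k)-v^{used}(x_k,t_k)\|\le p\sqrt{\epsilon}$'' and then unrolls with the constant per-step term $hp\sqrt{\epsilon}+\tfrac{h^2}{2}N$. That is your ``uniform effective constant'' device. As a per-step inequality it does not follow from the hypotheses, since on an accepted step only $\sqrt{\epsilon}$ is available.

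Your concern is not an artifact of the technique: the lemma as literally stated is false. Take $d=1$, $v(x,t)=Mx$, $x_0=0$ (so $x\equiv 0$ and one may take $N=0$), $\tilde v=v+\sqrt{\epsilon}$, $h=1/k$, and accept exactly the first $pk$ steps. Then $E_k=\tfrac{\sqrt{\epsilon}}{M}\bigl((1+hM)^k-(1+hM)^{(1-p)k}\bigr)\to\tfrac{\sqrt{\epsilon}}{M}\bigl(e^{M}-e^{(1-p)M}\bigr)$. The claimed bound is $\tfrac{\sqrt{\epsilon}}{M}\,p\,(e^{M}-1)$. By strict convexity of $s\mapsto e^{sM}$, the secant slope on $[1-p,1]$ exceeds the one on $[0,1]$. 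So $E_k$ is strictly larger than the bound for every $p\in(0,1)$ and $M>0$ once $h$ is small. Your version, with the placement assumption stated explicitly and only the $p=1$ bound in the worst case, is the correct one, and it is more careful than the paper's proof.

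One refinement of your fix. The equality ``the weighted sum is $p\cdot\frac{(1+hM)^k-1}{hM}$'' holds only in the expectation reading. There you need only $\Pr(j\in S)\le p$ for each $j$, with no independence, because the unrolled bound is linear in the indicators $\mathbf{1}[j\in S]$ with deterministic weights.

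For a deterministic ``evenly spread'' pattern, the weighted sum can overshoot depending on phase. For example, $k=2$ with $S=\{0\}$ gives $1+hM>1+\tfrac{hM}{2}$. The clean deterministic condition is that no prefix is over-represented. Write $c_m=|S\cap\{0,\dots,m-1\}|$ and $w_j=(1+hM)^{k-1-j}$, which decreases in $j$. Abel summation gives $\sum_{j\in S}w_j=w_{k-1}c_k+\sum_{j=0}^{k-2}(w_j-w_{j+1})c_{j+1}$. Hence $c_m\le pm$ for all $m\le k$ yields $\sum_{j\in S}w_j\le p\sum_{j<k}w_j$, which gives exactly the claimed bound. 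The front-loaded counterexample above violates precisely this prefix condition.
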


\noindent Here, the first term $\tfrac{e^{Mt_k}-1}{2M}hN$ represents the standard Euler discretization error, while the second term $\tfrac{e^{Mt_k}-1}{M}p\sqrt{\epsilon}$ quantifies the additional deviation introduced by speculative updates. The proof of this lemma is given in Appendix \ref{sec:proof_lemma}. This lemma leads us to the theorem stated below.

\begin{theorem}\label{thm:theorem}
Let $q_d > 0$ denote a user-specified tolerance on the maximum admissible deviation from full (non-speculative)  generation trajectory. Then, for the speculative error $\|E_k^s\|$ (i.e., the deviation introduced by speculative generation) to remain bounded by $q_d$, it is sufficient to choose the speculative threshold $\epsilon$ such that
\[
\epsilon \;\leq\;\left(\frac{q_d}{2A}\right)^2,
\qquad A = \frac{e^M - 1}{M}.
\]
\end{theorem}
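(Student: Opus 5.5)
The plan is to isolate the speculative part of the bound in Lemma~\ref{thm:lemma} and turn it into a condition on $\epsilon$. The lemma bounds the total error by two pieces. The first, $\tfrac{e^{Mt_k}-1}{2M}hN$, is the usual Euler error and is also present in full (non-speculative) generation. The second, $\tfrac{e^{Mt_k}-1}{M}p\sqrt{\epsilon}$, is the extra deviation due to accepted drafts.

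\emph{Step 1: isolate the speculative error.} I would define $E_k^s$ as the deviation between the speculative Euler iterates and the full Euler iterates, rather than the exact solution. I then rerun the lemma's recursion with the full Euler trajectory as the reference. In that recursion the local truncation terms cancel, and the per-step recursion becomes
\[
\|E_{k+1}^s\| \le (1+hM)\|E_k^s\| + h\,\mathbf{1}_{\text{spec}}\sqrt{\epsilon}.
\]
Summing the geometric series with $(1+hM)^k\le e^{Mt_k}$ gives
\[
\|E_k^s\|\le \frac{e^{Mt_k}-1}{M}\,p\sqrt{\epsilon}.
\]
Equivalently, this is the lemma's bound with $N=0$, keeping only the $2p\sqrt{\epsilon}$ contribution, which yields $\tfrac{e^{Mt_k}-1}{2M}\cdot 2p\sqrt{\epsilon}$.

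\emph{Step 2: make the bound uniform in $k$.} Since $t_k\le 1$ and $e^{Mt}$ is increasing, I bound $e^{Mt_k}-1\le e^M-1$. Since $p\le 1$, this gives $\|E_k^s\|\le A\sqrt{\epsilon}$ with $A=(e^M-1)/M$.

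\emph{Step 3: impose the tolerance.} The condition $\epsilon\le (q_d/(2A))^2$ gives $\sqrt{\epsilon}\le q_d/(2A)$, and hence $\|E_k^s\|\le q_d/2\le q_d$. So the stated condition is sufficient. The factor $2$ gives slack; I would note it as accommodating the lemma's $2p\sqrt{\epsilon}$ form, e.g.\ if one instead bounds the speculative contribution by $\tfrac{e^{M}-1}{2M}\cdot 2\cdot 2\sqrt{\epsilon}$ through the triangle inequality against the exact solution on both trajectories.

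\emph{Main obstacle.} The delicate point is Step 1: justifying that the discretization term really drops out, or is otherwise attributed to the non-speculative baseline. A second issue is that the verification test bounds the MSE, $\tfrac1d\|\cdot\|^2$, while the lemma needs a Euclidean-norm bound $\|\tilde v - v\|\le\sqrt{\epsilon}$. I would adopt the lemma's hypothesis 3 as given, absorbing the dimension factor into $\epsilon$, rather than rederiving it from the MSE test.
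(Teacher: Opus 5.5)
Your argument is correct, but it follows a different route from the paper's. The paper's proof reads off the lemma's bound at $t_k=1$ and splits it into two summands: $\frac{e^M-1}{2M}hN$, labelled discretization error, and $\frac{e^M-1}{M}p\sqrt{\epsilon}$, labelled speculative deviation. It requires the latter to be at most $q_d$ and ``rearranges'' to $\epsilon\le (q_d/(2A))^2$; this is sufficient because $p\le 1$, and the extra $2$ is slack.

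You instead define $E_k^s$ as the gap between the speculative and the full Euler iterates and rerun the recursion. Both sequences use the same Euler map, so no Taylor remainder appears, and you get $\|E_k^s\|\le \frac{(1+hM)^k-1}{M}\sqrt{\epsilon}\le A\sqrt{\epsilon}\le q_d/2$.

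The paper's version is a one-line corollary, but the lemma does not actually imply it. One summand of a bound on $\|x(t_k)-x_k\|$ is not a bound on the deviation from the full trajectory, and passing through the exact solution via the triangle inequality brings the $hN$ terms back. Your recursion proves exactly the quantity the theorem speaks of. It needs neither $\|x''\|\le N$ nor $hM<1$, and it shows the factor $2$ is pure slack. So the ``main obstacle'' you flag is already resolved by your own Step 1.

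Two minor fixes. First, the intermediate bound with $p$ does not follow from your indicator recursion. Unrolling gives $h\sqrt{\epsilon}\sum_{i\in S}(1+hM)^{k-1-i}$ over the accepted speculative steps $S$, which can exceed $p\,\frac{(1+hM)^k-1}{M}\sqrt{\epsilon}$ when those steps come early; the lemma's ``fraction $p$'' step has the same flaw. Just bound the indicator by $1$, which is all Step 2 uses. Second, drop the parenthetical explanation of the factor $2$ via the triangle inequality against the exact solution, for the reason just given.

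Your MSE-versus-norm remark is a legitimate gap in the paper's setup: MSE below $\epsilon$ only gives a norm bound of $\sqrt{d\epsilon}$. You handle it sensibly by taking hypothesis 3 as given.
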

Using this result, we can formally control the worst-case deviation between speculative and full trajectories by setting an appropriate $\epsilon$. Theorem~\ref{thm:theorem} serves as a principled mechanism to tune $\epsilon$, ensuring that trajectory deviation through speculative steps does not exceed a quality budget. The proof of the Theorem \ref{thm:theorem} can be found in Appendix \ref{sec:proof_theorem}.

%% file: chapters/experiments.tex
\section{Experiments}
Below we provide details of our experimental setup and results. We begin with dataset descriptions. 
\begin{itemize}
    \item \textbf{Text-to-image generation}: 
We use \textbf{GenEval} dataset \cite{ghosh2023geneval}, consisting of 553 prompts targeting compositional reasoning, with categories evaluating object co-occurrence, spatial positioning, color binding, and object count.
\item \textbf{Image editing}: 
We adopt the \textbf{GEdit} benchmark \cite{liu2025step1x}, which provides 606 real-world editing instructions spanning object manipulation, color modification, etc. 
\item \textbf{Multi-turn editing}:  
We further construct an auxiliary dataset based on EditBench, where GPT-4.1 \cite{achiam2023gpt} generates three incremental edits followed by three reverse edits restoring the original image. Reconstruction fidelity is quantified via PSNR between the final output and the original.
\item \textbf{Video generation}: 
We evaluate on the \textbf{VBench} dataset \cite{huang2024vbench}, sampling 80 prompts by uniformly drawing 5 from each of the 16 evaluation dimensions, ensuring balanced coverage across motion, persistence, camera control, and scene composition.
\end{itemize}

\begin{table}[t]
\small
\centering
\footnotesize
\setlength{\tabcolsep}{6pt}
\renewcommand{\arraystretch}{1.1}
\begin{tabular}{l|ccc|ccc}
\toprule
\multirow{2}{*}{\textbf{Method}} & \multicolumn{3}{c|}{\textbf{BAGEL}} & \multicolumn{3}{c}{\textbf{FLUX}} \\
\cmidrule(lr){2-4} \cmidrule(lr){5-7}
& Overall $\uparrow$ & CLIPIQA $\uparrow$ & Speedup $\uparrow$ 
& Overall $\uparrow$ & CLIPIQA $\uparrow$ & Speedup $\uparrow$ \\
\midrule
\multicolumn{7}{c}{\textit{Full Model}} \\
\midrule
50 steps & \textbf{0.78} & \textbf{0.84} & 1.00$\times$ & \textbf{0.65} & \textbf{0.83} & 1.0$\times$ \\
25 steps & 0.77 & 0.82 & 2.0$\times$ & 0.64 & 0.80 & 2.00$\times$ \\
10 steps & 0.73 & 0.75 & 5.0$\times$ & 0.57 & 0.59 & 5.00$\times$ \\
5 steps  & 0.57 & 0.40 & 10.0$\times$ & 0.44 & 0.43 & 10.0$\times$ \\
\midrule
\multicolumn{7}{c}{\textit{Static Baselines}} \\
\midrule
InstaFlow  & 0.33 & 0.70 & 50.0$\times$ & --   & --   & -- \\
PerFlow    & 0.58 & 0.79 & 5.0$\times$  & --   & --   & -- \\
TeaCache   & 0.75 & 0.80 & 1.8$\times$  & 0.64 & 0.81 & 1.84$\times$ \\
\midrule
\multicolumn{7}{c}{\textit{Ours: Speculative Decoding}} \\
\midrule
\algo-50 & \textbf{0.78} & 0.83 & 2.5$\times$  & \textbf{0.65} & \textbf{0.83} & 2.4$\times$ \\
\algo-25 & 0.77 & 0.82 & 4.1$\times$  & 0.64 & 0.80 & 4.2$\times$ \\
\algo-10 & 0.73 & 0.74 & 7.8$\times$  & 0.57 & 0.60 & 7.6$\times$ \\
\algo-5  & 0.57 & 0.38 & 13.0$\times$ & 0.43 & 0.45 & 12.8$\times$ \\
\bottomrule
\end{tabular}
\caption{\textbf{Comparison of BAGEL and FLUX Models on Image Generation.} 
We report Overall GenEval, CLIPIQA image quality scores, and Speedup relative to the 50-step full generation. 
}
\label{tab:res_main}
% \vspace{-0.cm}
\end{table}

\textbf{Baselines:}
We compare our approach against the following baselines:

\noindent\textbf{Full Generation.} This is the standard sampling procedure, where the model executes the entire denoising trajectory without any acceleration. It provides the upper bound in terms of fidelity and serves as the reference point for all accelerated methods.
\newline
\noindent\textbf{TeaCache.} \cite{liu2025timestep} It speeds up generation by caching intermediate representations and reusing them across timesteps, guided by timestep embeddings. By eliminating computations, it achieves faster inference, at the cost of fidelity degradation due to approximation in reused states.
\newline
\noindent\textbf{InstaFlow.}\cite{liu2023instaflow} A flow-matching–based sampler trained for extremely fast generation (down to a single step) in Stable-Diffusionv-1.5 model. While efficient, it sacrifices fidelity compared to full sampling. We evaluate InstaFlow on the released Stable-Diffusion-v1.5 weights.
\newline
\noindent\textbf{PeRFlow (Piecewise Rectified Flow).} PeRFlow \cite{yan2024perflow} straightens the diffusion trajectory via piecewise-linear rectification over segmented timesteps, enabling few-step generation with strong quality–efficiency tradeoffs on Stable-Diffusion-xl model. We evaluate PerFlow on the released Stable-Diffusion-xl checkpoints.
\newline
\noindent\textbf{\algo{} (Ours).} Our approach accelerates inference by allowing a lightweight draft process to propose multiple future steps, which are then verified in parallel by the full model. We denote this as \textit{\algo-$n$}, where $n$ represents the number of speculative steps.

\begin{figure}[t]
    \centering
    
    % First subfigure
    \begin{subfigure}{0.95\textwidth}
        \centering
        \includegraphics[width=\linewidth]{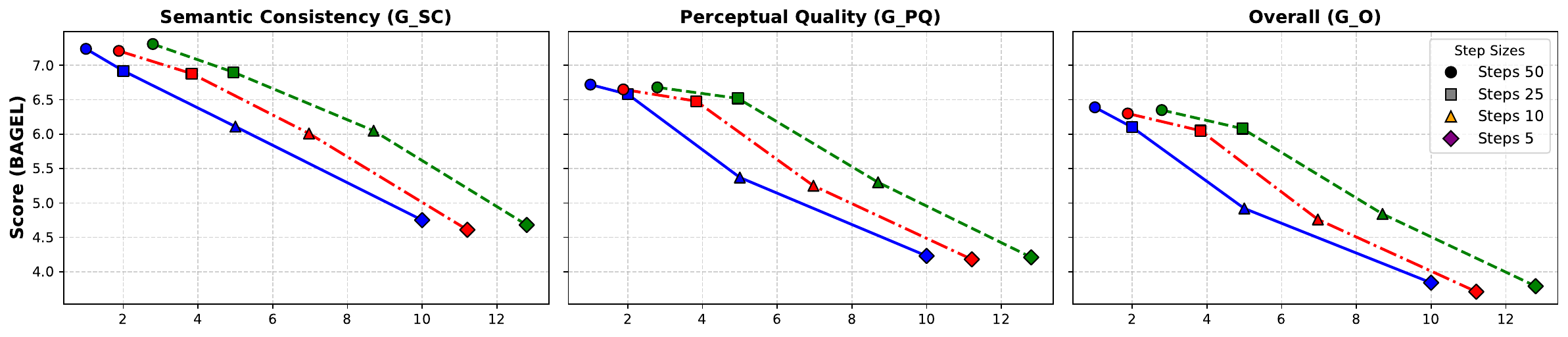}
        % \caption{Bagel Editing}
        % \label{fig:bagel_edit}
    \end{subfigure}
    
    \vspace{0.5em}

    % Second subfigure (cropped)
    \begin{subfigure}{0.95\textwidth}
        \centering
        \includegraphics[width=\linewidth]{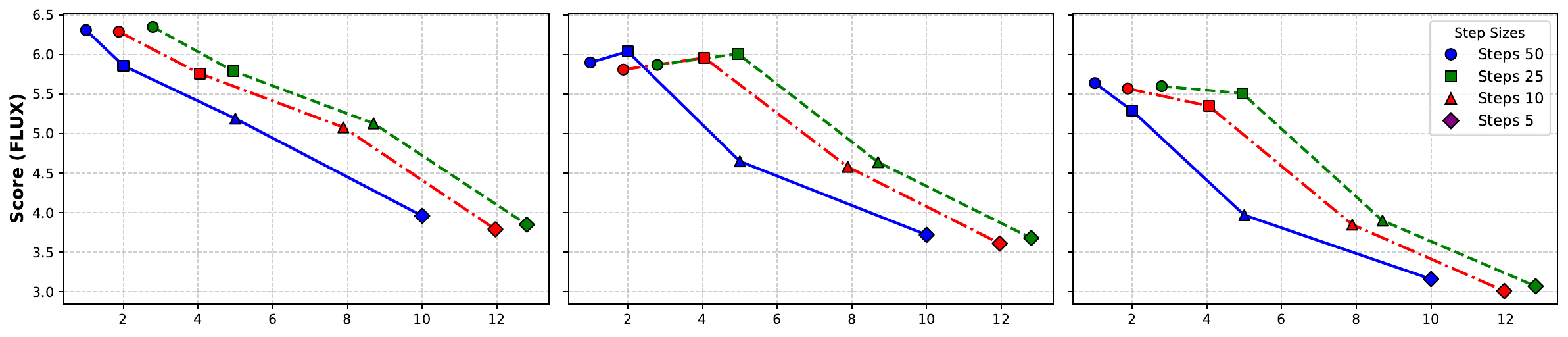}
        % \caption{Flux Editing (cropped)}
        % \label{fig:flux_edit}
    \end{subfigure}
    
    \vspace{0.5em}

    % Third subfigure
    \begin{subfigure}{0.95\textwidth}
        \centering
        \includegraphics[width=\linewidth]{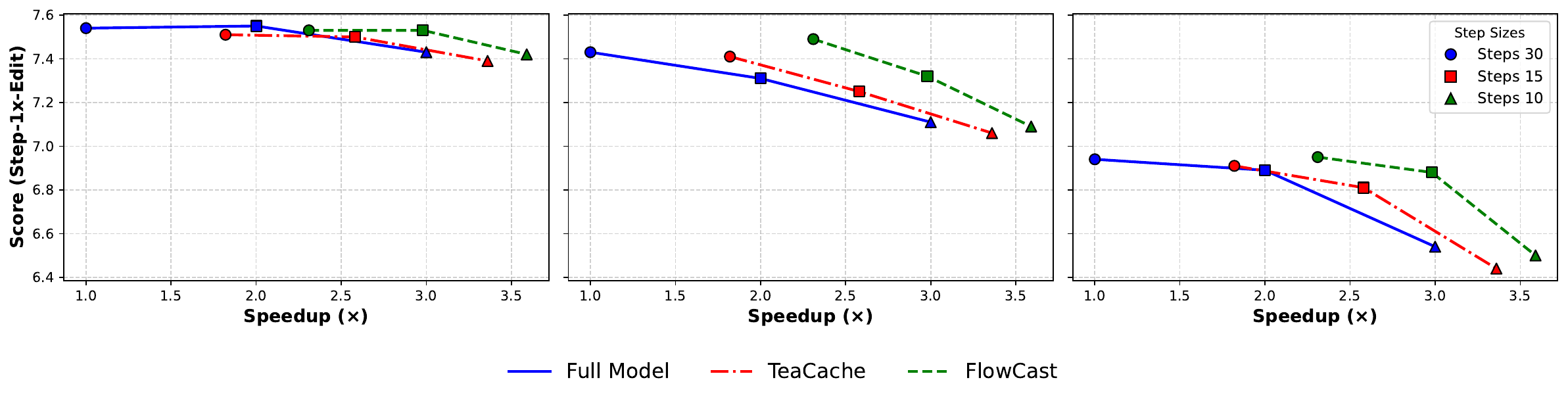}
        % \caption{Step Editing}
        % \label{fig:step_edit}
    \end{subfigure}

    % Global caption
    \caption{GEdit Scores across three models: BAGEL, FLUX, and Step-Edit. Each subfigure reports semantic consistency (G\_SC), perceptual quality (G\_PQ), and overall score (G\_O) versus speedup.}
    \label{fig:editing_quant}
\end{figure}

\begin{figure}
    \centering
    \vspace{-0.2cm}
    \includegraphics[scale = 0.25]{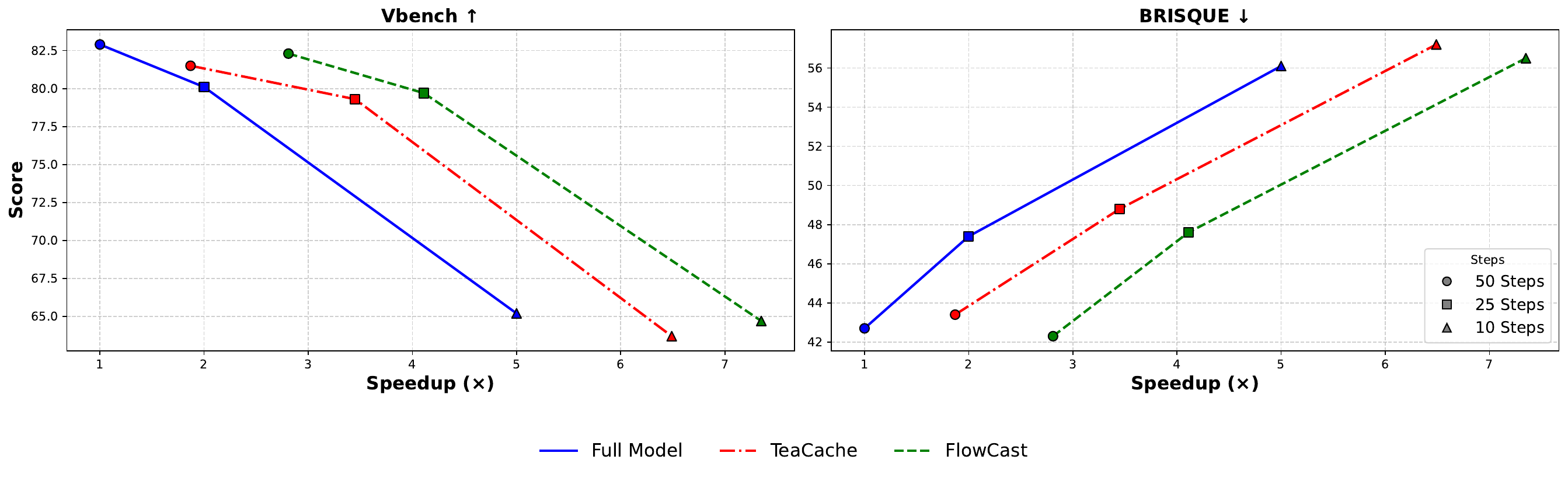}
    \caption{Results on Hunyuan Video model where we report the Vbench score and specifically for quality we use the BRISQUE metrics that assess the quality of individual frames.}
    \label{fig:video_quant}
\end{figure}
% \begin{wraptable}{r}{0.51\textwidth} % r = right, adjust width as needed
% \centering
% \small
% \begin{tabular}{ccccc}
% \hline
% Method/Metric     & G\_SC & G\_PQ & G\_O & Speedup \\ \hline
% \multicolumn{5}{c}{Full Model}                     \\ \hline
% Step-Edit-30      & 7.54  & 7.43  & 6.94 & 1.00$\times$       \\
% Step-Edit-15      & 7.55  & 7.31  & 6.89 & 2.00$\times$       \\
% Step-Edit-10      & 7.43  & 7.11  & 6.54 & 3.00$\times$       \\ \hline
% \multicolumn{5}{c}{Model+Teacache}                 \\ \hline
% Teacache-30       & 7.51  & 7.41  & 6.91 & 1.82$\times$    \\
% Teacache-15       & 7.50  & 7.25  & 6.81 & 2.58$\times$    \\
% Teacache-10       & 7.39  & 7.06  & 6.44 & 3.36$\times$    \\ \hline
% \multicolumn{5}{c}{Model+Our Method}               \\ \hline
% Spec-30           & 7.53  & 7.49  & 6.95 & 2.31$\times$    \\
% Spec-15           & 7.53  & 7.32  & 6.88 & 2.98$\times$    \\
% Spec-10           & 7.42  & 7.09  & 6.50 & 3.59$\times$    \\ \hline
% \end{tabular}
% \caption{Results of the FLUX model over the GEdit dataset for the Image Editing task.}
% \label{tab:stepedit_results}
% \end{wraptable}

Our method is complementary to existing acceleration techniques such as PeRFlow and TeaCache, further reducing redundant steps even in frameworks already optimized for efficiency. The corresponding results are reported in Tables \ref{tab:res_perflow} and \ref{tab:res_step}.

For all baselines, we adopt the official hyperparameters provided in their codebases. In Table \ref{tab:res_main}, TeaCache is applied as released, and in Figure \ref{fig:editing_quant}, we further evaluate it across different timesteps to emphasize its plug-and-play flexibility. For the choice of $\epsilon$ in \algo{}, from Theorem \ref{thm:theorem}, we have found that value of $\epsilon\in[0.01, 0.02]$ for image generation (all models), $\epsilon \in [0.07, 0.08]$ for image editing (all models), $\epsilon\in[0.001, 0.002]$ for video generation were giving high speedups with similar performance as the full model. An ablation study over $\epsilon$ across multiple steps can be found in the Appendix \ref{fig:speedupvsacc}.

\textbf{Models.} To demonstrate the versatility of our approach, we evaluate across multiple state-of-the-art models: for image generation, BAGEL \cite{deng2025emerging}, Flux-Kontext \cite{labs2025flux}, and PeRFlow \cite{yan2024perflow}; for image editing, BAGEL, Flux-Kontext, and Step-1X-Edit \cite{liu2025step1x}; and for video generation, HunyuanVideo \cite{kong2024hunyuanvideo} are used.

% \begin{wraptable}{r}{0.55\textwidth} % r = right, l = left
% \centering
% \small
% \begin{tabular}{ccccc}
% \hline
% Method/Metric & G\_SC & G\_PQ & G\_O & Speedup \\ \hline
% \multicolumn{5}{c}{Full model}                 \\ \hline
% Full 50       & 6.31  & 5.90   & 5.64 & 1.00$\times$       \\
% Full 25       & 5.86  & 6.04  & 5.29 & 2.00$\times$       \\
% Full 10       & 5.19  & 4.65  & 3.97 & 5.00$\times$       \\
% Full 5        & 3.96  & 3.72  & 3.16 & 10.00$\times$      \\ \hline
% \multicolumn{5}{c}{Speculative Decoding}       \\ \hline
% Spec 50       & 6.35  & 5.83  & 5.60  & 2.85$\times$    \\
% Spec 25       & 5.79  & 6.01  & 5.51 & 4.79$\times$    \\
% Spec 10       & 5.13  & 4.64  & 3.90  & 8.58$\times$     \\
% Spec 5        & 3.85  & 3.68  & 3.07 & 12.1$\times$    \\ \hline
% \end{tabular}
% \caption{Results of the FLUX model over the GEdit dataset for the Image Editing task.}
% \label{tab:flux_results}
% \end{wraptable}

\begin{figure}[!t]
    \centering
    \includegraphics[scale=0.61]{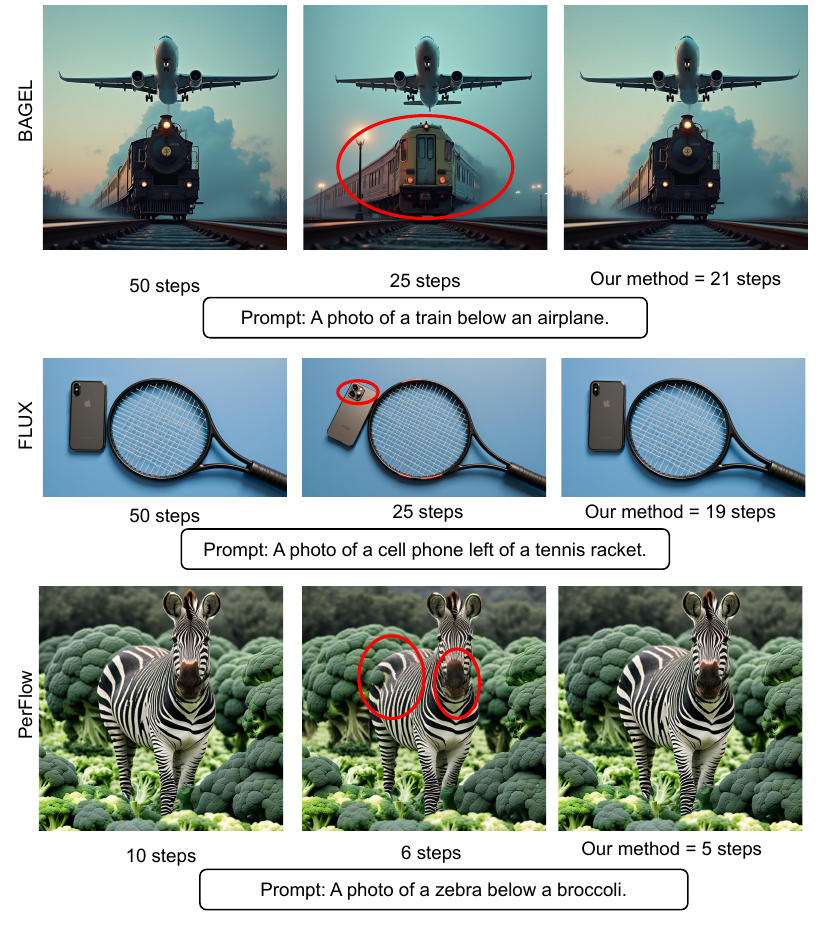}
    \caption{Speculative step reduction maintains image quality across models, whereas direct step reduction leads to noticeable degradation.}
    \label{fig:generation_examples}
\end{figure}

\subsection{Results}

\textbf{Image Generation:} Table~\ref{tab:res_main} reports results on the GenEval benchmark with BAGEL and FLUX, and Table~\ref{tab:res_perflow} shows additional evaluation on the optimized PerFlow model. GenEval scores capture semantic dimensions (detailed in Tables~\ref{tab:res_bagel}, \ref{tab:res_flux}), while fidelity is assessed via CLIPIQA. Static reduction methods (e.g., fixed step truncation) cause significant quality loss, whereas our adaptive approach achieves the lowest quality drop and preserves semantic alignment.

\textbf{Image Editing:} Figure~\ref{fig:editing_quant} presents GEdit results with BAGEL, FLUX, and Step-1X-Edit, reporting G\_SC (semantic consistency), G\_PQ (perceptual quality), and G\_O (overall editability). Static step reduction degrades both editability and quality, while our adaptive strategy consistently matches full-step generation. Multi-iteration results are shown in Figure \ref{fig:multi_iteration_quant} and discussed in Appendix \ref{sec:multi_iter}.

\textbf{Video Generation:} In Figure~\ref{fig:video_quant}, we extend our analysis to video generation using the HunyuanVideo model. Performance is reported with the vBench metric, which evaluates temporal coherence, background stability, flicker artifacts, etc. We also report the frame-wise quality score using the BRISQUE metric. Static reduction methods introduce visible temporal inconsistencies and motion artifacts, whereas \algo{} scheme maintains high-quality, temporally stable videos. 

\textbf{Qualitative analysis:}
We qualitatively compare static and adaptive step reduction across tasks. In image generation (Figure~\ref{fig:generation_examples}), static reduction often introduces semantic inconsistencies and artifacts, whereas our adaptive method skips only redundant steps, producing outputs closely aligned with full-step generations. For editing (Figures~\ref{fig:editing_examples}, \ref{fig:multiedit_example}), static reduction yields incomplete or unintended edits and compounds residual noise, while our method preserves semantic accuracy and visual fidelity—even in multi-turn settings. In video generation (Figure~\ref{fig:video_examples}), static reduction causes temporal ruptures and frame drops, whereas \algo{} maintains smoothness and fidelity.

% These results highlight the strength of our approach: steps are pruned adaptively and verified against the model’s dynamics, rather than skipped a priori. This self-verification ensures that only truly redundant updates are removed, preserving semantic and perceptual integrity. Consequently, \algo{} achieves substantial acceleration while remaining nearly indistinguishable from full-step outputs.

\textbf{Summary:} Across diverse tasks---image generation, editing, multi-iteration editing, and video generation---existing acceleration techniques consistently suffer from loss of fidelity and alignment, highlighting their limited generality. In contrast, \algo{} emerges as a universal adaptive framework for accelerating flow-based models that adapts to the generation complexity (see Appendix \ref{sec:adaptiveness}). Unlike heuristic truncation, which causes performance drops, \algo{} leverages model-informed verification to skip only redundant updates. This ensures that outputs remain \textit{ indistinguishable from full-step generation} while achieving substantial speedups. These results establish \algo{} as the preferred solution for efficient, reliable, and broadly transferable step reduction across modalities.

%% file: chapters/conclusion.tex
\section{Conclusion}
We introduced an adaptive inference strategy that accelerates generation by eliminating truly redundant computation. Our approach extends speculative decoding with zero-cost drafts obtained directly from prior velocity predictions, which are then verified in parallel through a lightweight mean-squared error criterion. Beyond its simplicity, we establish theoretical guarantees on the deviation from the original trajectory, ensuring both reliability and rigor. Extensive experiments across diverse models and tasks demonstrate that our method achieves consistent speedups without sacrificing performance, highlighting its potential as a general framework for faster and reliable inference. The main limitation of \algo{} is its dependence on parallel draft evaluations, which require adequate compute; cutting drafts reduces overhead but also reduces speedup.

\begin{figure}
    \centering
    \includegraphics[scale=0.55]{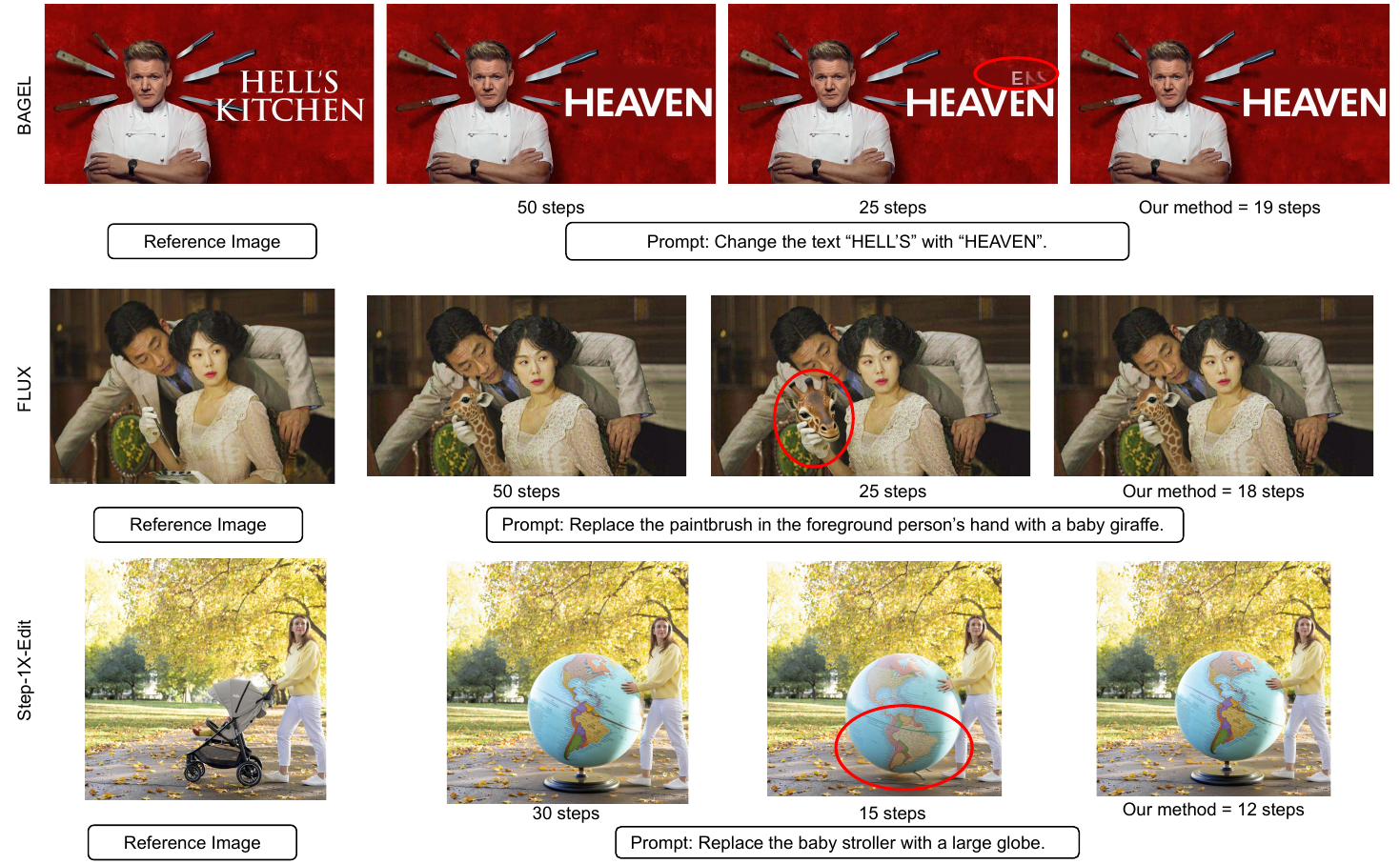}
    \caption{Speculative step reduction maintains edit fidelity and consistency better than direct step reduction.}
    \label{fig:editing_examples}
\end{figure}

% \textbf{Limitations}
% While our approach accelerates inference effectively, it assumes access to sufficient computational resources to support parallel evaluation of multiple draft trajectories. In practice, this requirement may not always hold, particularly in resource-constrained environments such as mobile devices or edge deployments, where memory and compute budgets are limited. Although our method is designed to be flexible—by monitoring the trajectory and adaptively deciding how many speculative steps to pursue—there remains an inherent trade-off: reducing the number of drafts alleviates the compute overhead but also limits the potential speedup. Consequently, our method gets its efficiency gains in scenarios where moderate parallel computing is available.

%% file: chapters/ethics_repro.tex
\section{Ethics and Reproducibility statement}
We have read and adhere to the Code of Ethics as detailed out for this conference. To the best of our knowledge, this is our original work and all related work has been appropriately cited. All authors have contributed towards this work and are responsible for it's content. There is no implicit use of Large Language Models (LLMs), except for the cases where these models are already part of our technical method, and in cases where the use has already been disclosed.

We describe a detailed algorithm to reproduce our work. We use publicly available code, models, and datasets. Additionally, we intend to provide the source code using which the results of this work can be reproduced. The results from related works are taken from their official manuscripts and/or official source repositories (as separately specified for each such work).

%% file: chapters/appendix.tex
\section{Appendix}

\subsection{Proof of Lemma \ref{thm:lemma}}\label{sec:proof_lemma}
\begin{proof}
Define the error at step \( k+1 \) as
\[
E_{k+1} := x(t_{k+1}) - x_{k+1}.
\]
Using the Euler update,
\[
x_{k+1} = x_k + h\, v^{used}(x_k, t_k),
\]
where \( v^{used} \) is either \( v_\theta \) or the speculative velocity \( \tilde{v} \) when accepted.

By Taylor expansion of \( x(t) \) around \( t_k \), there exists some \(\psi \in (t_k, t_{k+1})\) such that
\[
x(t_{k+1}) = x(t_k) + h\, x'(t_k) + \frac{h^2}{2} x''(\psi).
\]

Therefore,
\[
\begin{aligned}
\| E_{k+1} \| &= \left\| x(t_k) + h v_\theta(x(t_k), t_k) + \frac{h^2}{2} x''(\psi) - \left( x_k + h v^{used}(x_k, t_k) \right) \right\| \\
&= \left\| E_k + h \big( v_\theta(x(t_k), t_k) - v^{used}(x_k, t_k) \big) + \frac{h^2}{2} x''(\psi) \right\| \\
&\leq \| E_k \| + h \left\| v_\theta(x(t_k), t_k) - v_\theta(x_k, t_k) \right\| + h \left\| v_\theta(x_k, t_k) - v^{used}(x_k, t_k) \right\| + \frac{h^2}{2} N.
\end{aligned}
\]

By the Lipschitz continuity of \( v_\theta \) in \( x \),
\[
\left\| v_\theta(x(t_k), t_k) - v_\theta(x_k, t_k) \right\| \leq M \| x(t_k) - x_k \| = M \| E_k \|.
\]

Also, by assumption, at steps where the speculative velocity is accepted,
\[
\left\| v_\theta(x_k, t_k) - \tilde{v}(x_k, t_k) \right\| \leq \sqrt{\epsilon}.
\]
Accounting for the fraction \( p \) of such steps,
\[
\left\| v_\theta(x_k, t_k) - v^{used}(x_k, t_k) \right\| \leq p \sqrt{\epsilon}.
\]

Combining the above,
\[
\| E_{k+1} \| \leq \| E_k \| + h M \| E_k \| + h p \sqrt{\epsilon} + \frac{h^2}{2} N = (1 + h M) \| E_k \| + h p \sqrt{\epsilon} + \frac{h^2}{2} N.
\]

Let
\[
A := h p \sqrt{\epsilon} + \frac{h^2}{2} N.
\]

The recursion becomes
\[
\| E_{k+1} \| \leq (1 + h M) \| E_k \| + A.
\]

Unrolling this recursion with initial error \( E_0 = 0 \):
\[
\begin{aligned}
\| E_k \| &\leq A \sum_{i=0}^{k-1} (1 + h M)^i = A \frac{(1 + h M)^k - 1}{(1 + h M) - 1} = \frac{(1 + h M)^k - 1}{h M} A \\
&= \frac{(1 + h M)^k - 1}{2 M} \left( h N + 2 p \sqrt{\epsilon} \right).
\end{aligned}
\]

For \( h M < 1 \), it is well-known that
\[
(1 + h M)^k \leq e^{h M k} = e^{M t_k}.
\]
Hence,
\[
\| E_k \| \leq \frac{e^{M t_k} - 1}{2 M} \left( h N + 2 p \sqrt{\epsilon} \right).
\]

Since \( t_k \leq 1 \), for the entire integration interval,
\[
\| E_k \| \leq \frac{e^{M} - 1}{2 M} \left( h N + 2 p \sqrt{\epsilon} \right).
\]

\end{proof}

\subsection{Proof of Theorem \ref{thm:theorem}}\label{sec:proof_theorem}
From Lemma~\ref{thm:lemma}, we know that after full generation (i.e., at $t_k = 1$), the total error is bounded by
\[
\|E\| \;\leq\; \frac{e^M - 1}{2M}\,\big(hN + 2p\sqrt{\epsilon}\big).
\]
This upper bound can be decomposed into two contributions:
\[
\underbrace{\frac{e^M - 1}{2M}\,hN}_{\text{discretization error}} \quad+\quad \underbrace{\frac{e^M - 1}{M}\,p\sqrt{\epsilon}}_{\text{speculative deviation}}.
\]

The first term corresponds to the inherent discretization error of the full generation process, whereas the second term quantifies the additional deviation induced by speculative generation. To ensure that the speculative deviation does not exceed the admissible tolerance $q_d$, it suffices to require
\[
\frac{e^M - 1}{M}\,p\sqrt{\epsilon} \;\leq\; q_d.
\]
Rearranging, we obtain the sufficient condition
\[
\epsilon \;\leq\; \left(\frac{q_d}{2A}\right)^2, 
\qquad A = \frac{e^M - 1}{M}.
\]
Hence, the deviation introduced by speculative generation remains within the prescribed tolerance $q_d$, completing the proof.
\qed

\subsection{Multi-iteration Editing}\label{sec:multi_iter}
Figure~\ref{fig:multi_iteration_quant} evaluates the robustness of multi-step editing, where a sequence of edits is applied and subsequently reverted to reconstruct the original image. To quantify reconstruction fidelity, we report LPIPS, SSIM, and PSNR between the restored and ground-truth inputs.

The motivation for this setup is that diffusion-based editing models typically inject a small amount of noise into the image during each edit. While this noise is often imperceptible for a single edit, it can accumulate significantly when multiple sequential edits are performed. In such scenarios, naive acceleration strategies—such as uniformly reducing the number of steps—tend to amplify the compounded noise, degrading reconstruction quality.

This phenomenon is illustrated in Figure~\ref{fig:multiedit_example}. Although the full 50-step process introduces some noise, it still preserves most of the structural and semantic consistency of the original input. By contrast, a direct reduction to 25 steps produces noticeably different outputs, with the discrepancies compounding as more edits are applied.

Our method avoids this pitfall. By selectively eliminating only truly redundant steps, it achieves a substantial reduction in inference time while maintaining reconstruction quality comparable to the full model. This suggests that our approach does not compound errors across sequential edits, making it a more reliable and stable solution for practical editing workflows.

\subsection{Ablation}
In Figure \ref{fig:speedupvsacc}, we report the trade-off between speedup and reconstruction quality on the multi-turn edit dataset. The curves demonstrate that increasing the value of $\epsilon$ results in higher computational speedup but at the expense of degraded perceptual quality, as measured by PSNR. Distinct colors in the figure denote different $\epsilon$ values, thereby disentangling the contribution of this parameter to the observed trade-offs. This representation enables systematic selection of hyperparameters: for a given target speedup, one can identify the step size and $\epsilon$ that jointly yield the most favorable balance between efficiency and fidelity. For instance, as shown in Figure \ref{fig:speedupvsacc}, achieving a $3\times$ speedup can be optimally realized with 50 sampling steps and $\epsilon=0.07$, which preserves quality while significantly reducing inference cost. Thus, the figure provides empirical evidence for principled hyperparameter selection in speculative decoding under efficiency constraints.

\subsection{Adaptiveness of \algo{}}\label{sec:adaptiveness}
In Figure~\ref{fig:adaptive_example}, we highlight the adaptive nature of \algo{}. When the input prompt corresponds to a relatively simple generation, the method accepts a larger proportion of draft proposals, leading to a substantial speedup. In contrast, for more complex prompts, fewer drafts are accepted, resulting in reduced speedup but higher fidelity—reflecting the need for finer refinements in challenging cases. This trend is consistently observed across all tasks considered in our study, including image generation, image editing, and video generation. Such behavior demonstrates that \algo{} automatically tailors the trade-off between efficiency and quality to the complexity of the input, making it a flexible and broadly applicable framework for diverse generative scenarios.

\subsection{Comparison of baseline methods}\label{sec:comparison}  

\textbf{Direct Reduction.}  
A naive alternative is to simply truncate the diffusion process by reducing the number of steps. However, this \emph{static reduction} assumes that the velocity at a removed step can be approximated by reusing the previous update. Such oversimplification discards valuable intermediate information and leads to noticeable quality degradation. In contrast, \algo{} does not discard steps blindly. It adaptively identifies which steps are genuinely redundant by leveraging model outputs, ensuring faster inference without sacrificing quality.  

\textbf{TeaCache.}  
TeaCache accelerates sampling by fitting a polynomial over timestep-modulated noisy inputs to decide which steps to cache. While simple, this approach suffers from two key drawbacks:  
1) It requires model- and task-specific calibration for the polynomial fit.  
2) It is not truly adaptive --- once a target speedup is fixed, TeaCache repeatedly applies the same caching pattern (e.g., alternating steps for $2\times$ speedup), independent of prompt complexity. In our experiments (50 videos and 100 diverse image generations), we consistently observed this repetitive caching behavior, showing its limited ability to handle prompt-specific difficulty.  

\textbf{Advantages of \algo{}.}  
Unlike static approaches, \algo{} is \emph{genuinely adaptive}. It learns on-the-fly during inference, tailoring its speedup to the complexity of generation. For simpler prompts, it aggressively reduces model calls to maximize efficiency; for harder prompts, it retains more steps to safeguard fidelity. This dynamic adjustment enables \algo{} to consistently achieve a superior efficiency--quality trade-off compared to both direct reduction and TeaCache.

\subsection{Bound Tightness}
In Figure \ref{fig:bound_comparison}, we compare the theoretical error bound derived in \ref{thm:lemma} with the empirically observed error. For very small values of $\epsilon$, the theoretical bound closely matches the empirical error, indicating that the bound is tight in the low-tolerance regime. As $\epsilon$ increases, we begin to observe a slight increase in the gap between the bound and the empirical value. This behavior is expected: larger $\epsilon$ allows the algorithm to accept increasingly aggressive skips, which carry a higher risk of error accumulation.

In other words, the bound “plays safe” by overestimating potential error when the acceptance threshold is large, ensuring that the guarantee remains valid even under worst-case deviations. Thus, the slight looseness at higher $\epsilon$ is a reflection of the bound's protective nature under more error-prone conditions.

\begin{figure}
    \centering
    \includegraphics[width=0.5\linewidth]{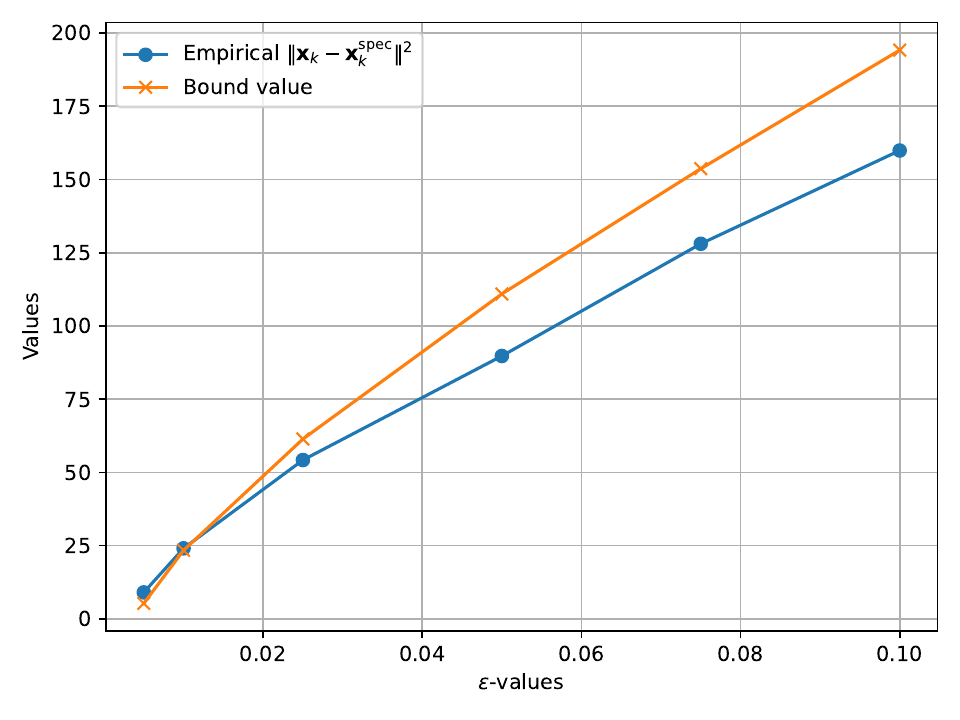}
    \caption{Comparison of the errors made by \algo{} vs the true bound value.}
    \label{fig:bound_comparison}
\end{figure}

\begin{figure}
    \centering
    \includegraphics[scale=0.279]{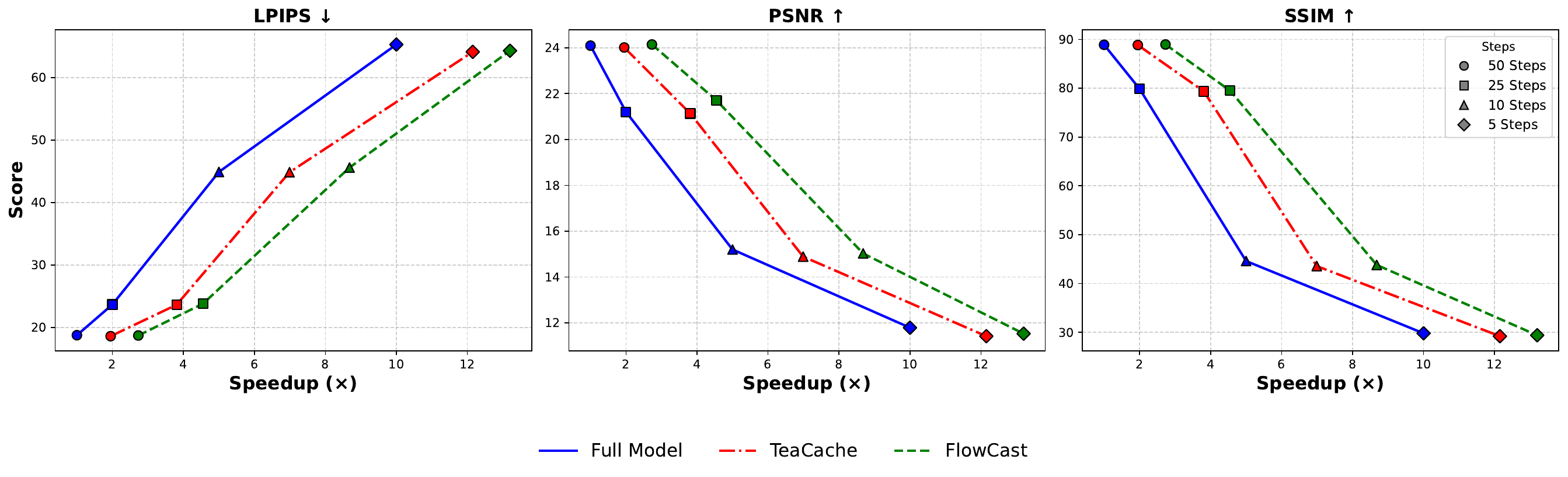}
    \caption{Results on the BAGEL model for multi-iteration editing, where the original image and the final image are compared, and LPIPS, PSNR and SSIM metrics are compared.}
    \label{fig:multi_iteration_quant}
\end{figure}

\begin{figure}
    \centering
    \includegraphics[scale=0.5]{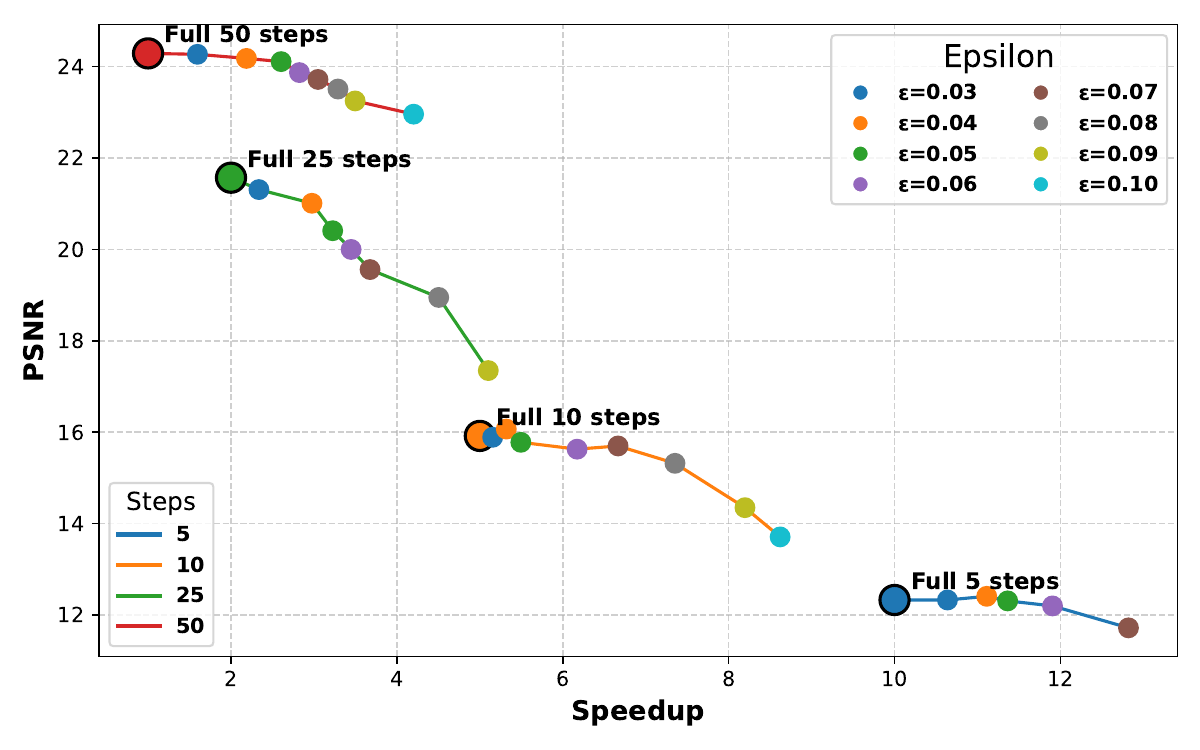}
    \caption{Speedup vs Quality curve for the multi-turn edit dataset on the BAGEL model.}
    \label{fig:speedupvsacc}
\end{figure}

\begin{figure}
    \centering
    \includegraphics[scale=0.55]{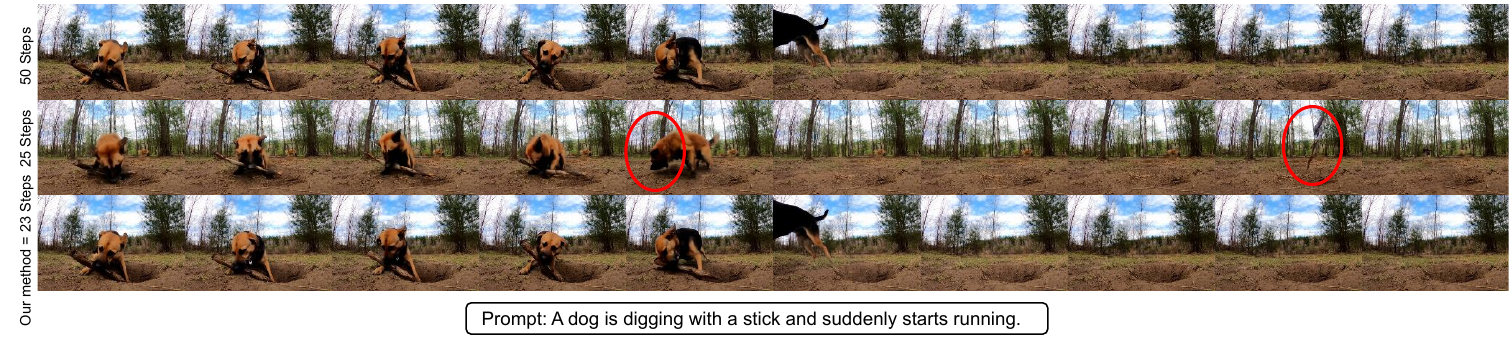}
    \caption{Speculative step reduction ensures temporal coherence and quality, outperforming direct step reduction.}
    \label{fig:video_examples}
\end{figure}

\begin{figure}
    \centering
    \includegraphics[scale=0.5]{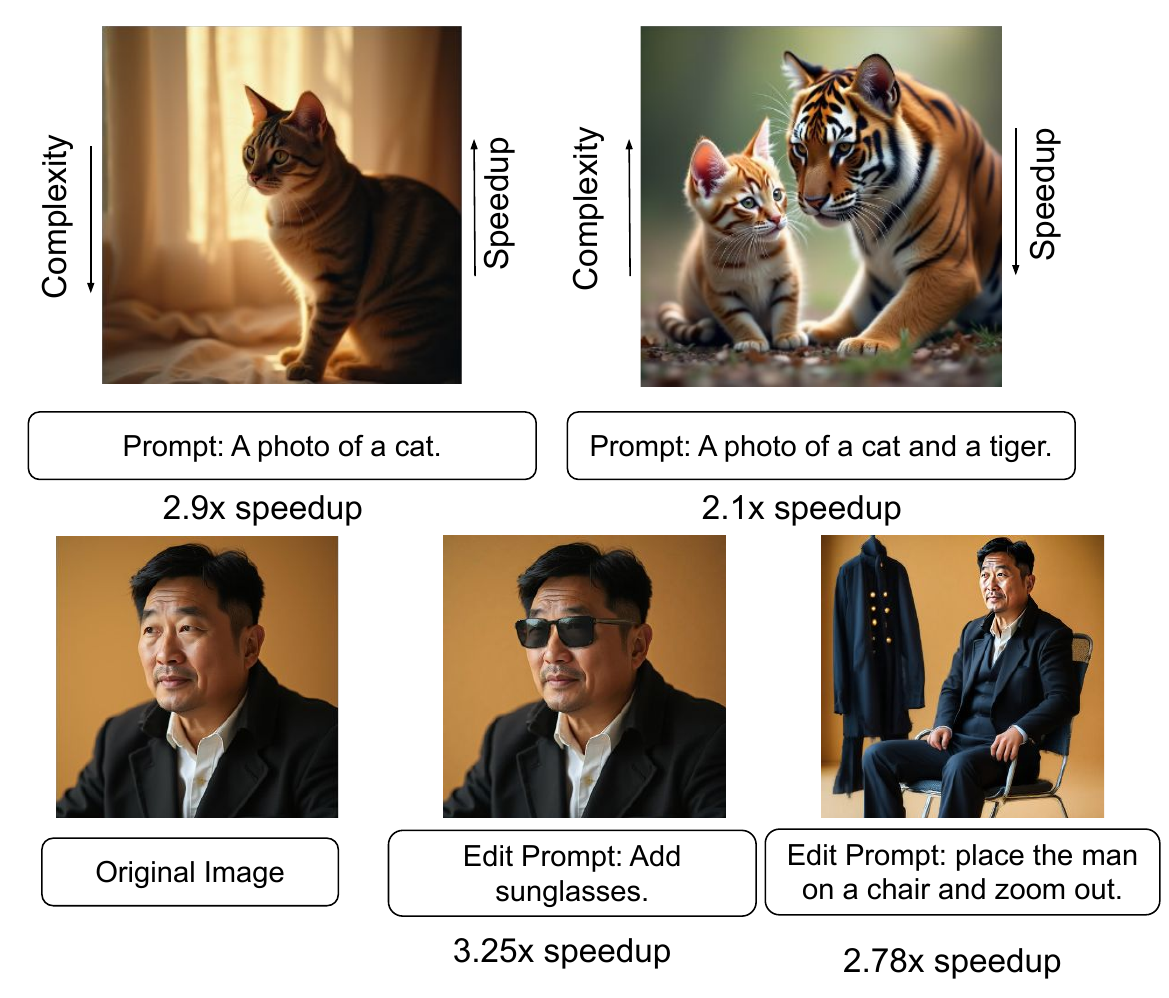}
    \caption{Examples over editing and generation tasks showing adaptiveness of our method where complex generations have lower speedups and vice-versa.}
    \label{fig:adaptive_example}
\end{figure}

\begin{figure}
    \centering
    \includegraphics[scale=0.39]{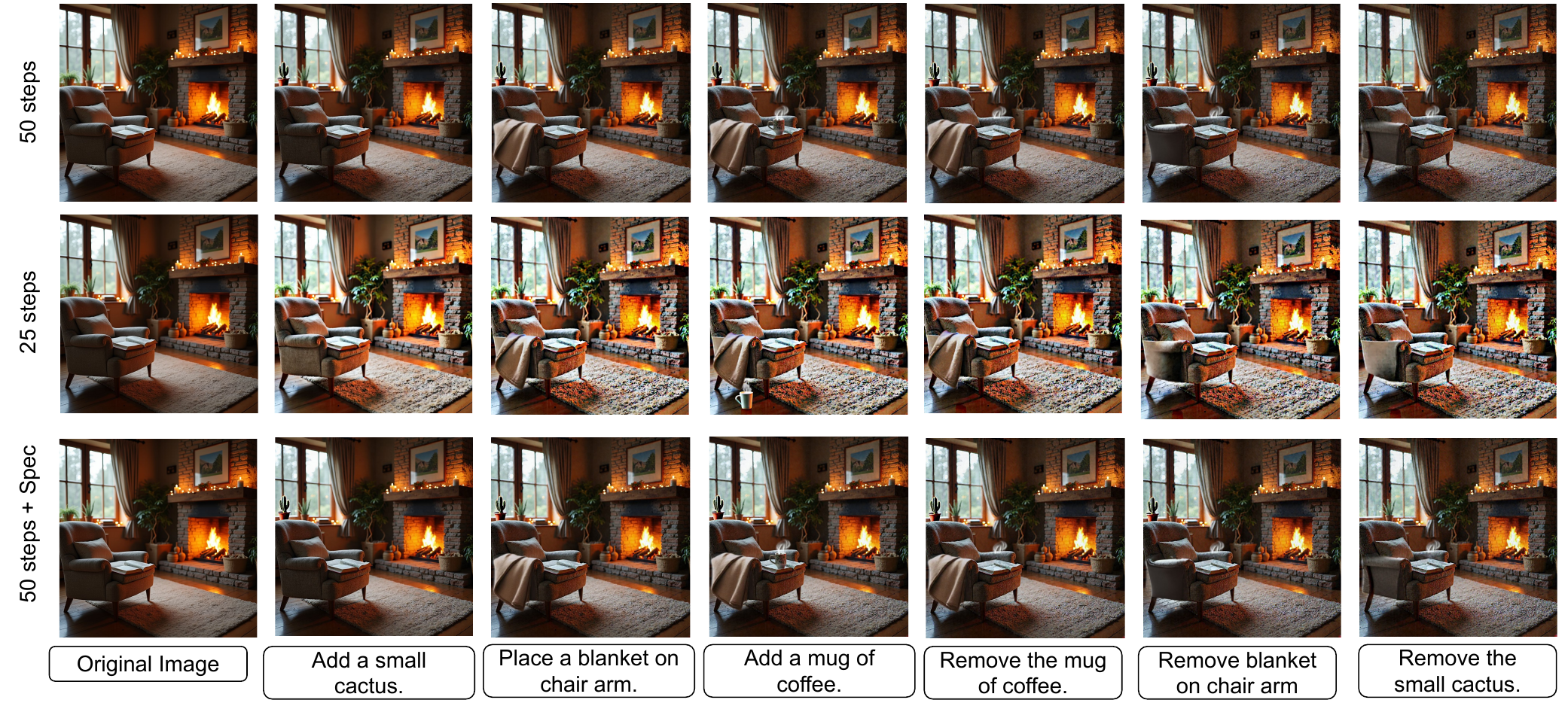}
    \caption{An instance of the multi-turn editing task where the performance of speculative generation is better than normal generation with reduced steps.}
    \label{fig:multiedit_example}
\end{figure}

\begin{figure}
    \centering
    \includegraphics[width=0.5\linewidth]{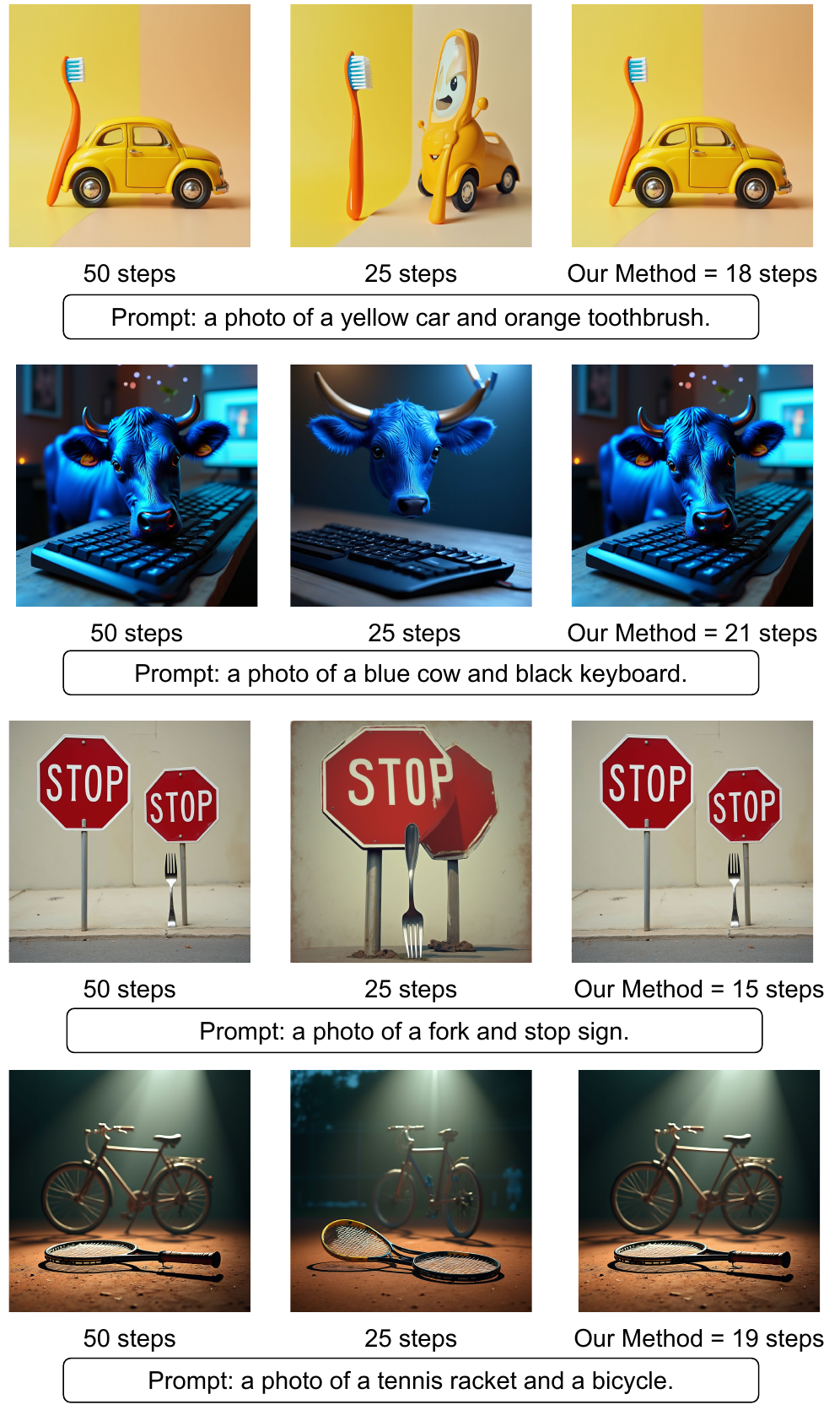}
    \caption{Multiple Generation instances using prompts.}
    \label{fig:more_generation}
\end{figure}

\begin{table}[t]
\centering
\footnotesize
\setlength{\tabcolsep}{4.5pt}
\renewcommand{\arraystretch}{1.05}
\begin{tabular}{l|cccccc|ccc}
\toprule
Method & SO & TO & CO & CL & CA & PO & Overall $\uparrow$ & CLIPIQA $\uparrow$ & Spd. $\uparrow$ \\
\midrule
\multicolumn{10}{c}{\textit{Full Model}} \\
\midrule
BAGEL-50 & 0.99 & 0.90 & 0.81 & 0.85 & 0.59 & 0.54 & \textbf{0.78} & \textbf{0.84} & 1.0$\times$ \\
BAGEL-25 & 0.99 & 0.90 & 0.78 & 0.84 & 0.62 & 0.51 & 0.77 & 0.82 & 2.0$\times$ \\
BAGEL-10 & 0.99 & 0.88 & 0.68 & 0.84 & 0.56 & 0.48 & 0.73 & 0.71 & 5.0$\times$ \\
BAGEL-5  & 0.93 & 0.62 & 0.55 & 0.72 & 0.25 & 0.36 & 0.57 & 0.40 & 10$\times$ \\
\midrule
\multicolumn{10}{c}{\textit{Static Baselines}} \\
\midrule
InstaFlow & 0.85 & 0.19 & 0.20 & 0.65 & 0.04 & 0.02 & 0.33 & 0.70 & 50$\times$ \\
PerFlow   & 0.99 & 0.79 & 0.43 & 0.85 & 0.25 & 0.15 & 0.58 & 0.79 & 5.0$\times$ \\
TeaCache  & 0.99 & 0.89 & 0.78 & 0.83 & 0.58 & 0.52 & 0.76 & 0.80 & 1.8$\times$ \\
\midrule
\multicolumn{10}{c}{\textit{Speculative Decoding (Ours)}} \\
\midrule
Spec-50 & 0.99 & 0.90 & 0.81 & 0.85 & 0.61 & 0.54 & \textbf{0.78} & 0.83 & 2.5$\times$ \\
Spec-25 & 0.99 & 0.91 & 0.77 & 0.84 & 0.62 & 0.51 & 0.77 & 0.82 & 4.1$\times$ \\
Spec-10 & 0.99 & 0.89 & 0.67 & 0.82 & 0.55 & 0.48 & 0.73 & 0.70 & 7.8$\times$ \\
Spec-5  & 0.93 & 0.61 & 0.55 & 0.73 & 0.25 & 0.36 & 0.57 & 0.38 & 13.0$\times$ \\
\bottomrule
\end{tabular}
\caption{\textbf{Results on the BAGEL model for image generation.} 
We report GenEval scores (SO: Single Object, TO: Two Objects, CO: Counting, CL: Color, CA: Category, PO: Position), overall score, CLIPIQA, and Speedup (Spd.). Speculative decoding achieves strong speedup while retaining quality close to the full model, outperforming static baselines.}
\label{tab:res_bagel}
\end{table}

\begin{table}[t]
\centering
\footnotesize
\setlength{\tabcolsep}{4.5pt}
\renewcommand{\arraystretch}{1.05}
\begin{tabular}{l|cccccc|ccc}
\toprule
Method & SO & TO & CO & CL & CA & PO & Overall $\uparrow$ & CLIPIQA $\uparrow$ & Spd. $\uparrow$ \\
\midrule
\multicolumn{10}{c}{\textit{Full Model}} \\
\midrule
FLUX-50 & 0.97 & 0.79 & 0.73 & 0.78 & 0.44 & 0.20 & \textbf{0.65} & \textbf{0.83} & 1.0$\times$ \\
FLUX-25 & 0.97 & 0.78 & 0.71 & 0.75 & 0.43 & 0.18 & 0.64 & 0.80 & 2.0$\times$ \\
FLUX-10 & 0.97 & 0.65 & 0.58 & 0.67 & 0.41 & 0.15 & 0.57 & 0.59 & 5.0$\times$ \\
FLUX-5  & 0.88 & 0.43 & 0.47 & 0.54 & 0.23 & 0.08 & 0.44 & 0.43 & 10$\times$ \\
TeaCache & 0.97 & 0.78 & 0.70 & 0.75 & 0.43 & 0.20 & 0.64 & 0.81 & 1.9$\times$ \\
\midrule
\multicolumn{10}{c}{\textit{Speculative Decoding (Ours)}} \\
\midrule
Spec-50 & 0.97 & 0.78 & 0.72 & 0.77 & 0.45 & 0.20 & \textbf{0.65} & \textbf{0.83} & 2.4$\times$ \\
Spec-25 & 0.97 & 0.78 & 0.70 & 0.76 & 0.42 & 0.18 & 0.64 & 0.80 & 4.2$\times$ \\
Spec-10 & 0.96 & 0.66 & 0.58 & 0.67 & 0.40 & 0.15 & 0.57 & 0.60 & 7.6$\times$ \\
Spec-5  & 0.87 & 0.42 & 0.46 & 0.53 & 0.24 & 0.08 & 0.43 & 0.47 & 12.8$\times$ \\
\bottomrule
\end{tabular}
\caption{\textbf{Results on the FLUX model for image generation.} 
We report GenEval scores (SO: Single Object, TO: Two Objects, CO: Counting, CL: Color, CA: Category, PO: Position), overall score, CLIPIQA, and Speedup (Spd.). Speculative decoding achieves substantial speedups while preserving performance close to the full model, surpassing static baselines.}
\label{tab:res_flux}
\end{table}

\begin{table}[t]
\centering
\footnotesize
\setlength{\tabcolsep}{4.5pt}
\renewcommand{\arraystretch}{1.05}
\begin{tabular}{l|cccccc|ccc}
\toprule
Method & SO & TO & CO & CL & CA & PO & Overall $\uparrow$ & CLIPIQA $\uparrow$ & Spd. $\uparrow$ \\
\midrule
\multicolumn{10}{c}{\textit{Full Model}} \\
\midrule
PerFlow-10 & 0.99 & 0.79 & 0.37 & 0.88 & 0.21 & 0.15 & \textbf{0.57} & \textbf{0.86} & 1.0$\times$ \\
PerFlow-6  & 0.99 & 0.78 & 0.36 & 0.86 & 0.20 & 0.15 & 0.55 & 0.83 & 2.0$\times$ \\
PerFlow-3  & 0.98 & 0.75 & 0.29 & 0.85 & 0.18 & 0.12 & 0.52 & 0.78 & 3.3$\times$ \\
\midrule
\multicolumn{10}{c}{\textit{Speculative Decoding (Ours)}} \\
\midrule
Spec-10 & 1.00 & 0.78 & 0.37 & 0.88 & 0.22 & 0.15 & \textbf{0.57} & 0.85 & 1.9$\times$ \\
Spec-6  & 0.99 & 0.77 & 0.37 & 0.87 & 0.21 & 0.15 & 0.55 & 0.82 & 2.7$\times$ \\
Spec-3  & 0.98 & 0.74 & 0.28 & 0.85 & 0.18 & 0.11 & 0.52 & 0.77 & 3.5$\times$ \\
\bottomrule
\end{tabular}
\caption{\textbf{Results on the PerFlow model for image generation.} 
We report GenEval scores (SO: Single Object, TO: Two Objects, CO: Counting, CL: Color, CA: Category, PO: Position), overall score, CLIPIQA, and Speedup (Spd.). Speculative decoding achieves strong speedups while retaining quality close to the full model.}
\label{tab:res_perflow}
\end{table}

\begin{table}[t]
\centering
\footnotesize
\setlength{\tabcolsep}{4.5pt}
\renewcommand{\arraystretch}{1.05}
\begin{tabular}{l|ccc|c}
\toprule
Method & G\_SC $\uparrow$ & G\_PQ $\uparrow$ & G\_O $\uparrow$ & Spd. $\uparrow$ \\
\midrule
\multicolumn{5}{c}{\textit{Full Model}} \\
\midrule
Step-Edit-30 & 7.54 & 7.43 & \textbf{6.94} & 1.0$\times$ \\
Step-Edit-15 & \textbf{7.55} & 7.31 & 6.89 & 2.0$\times$ \\
Step-Edit-10 & 7.43 & 7.11 & 6.54 & 3.0$\times$ \\
\midrule
\multicolumn{5}{c}{\textit{+ TeaCache}} \\
\midrule
TeaCache-30 & 7.51 & 7.41 & 6.91 & 1.8$\times$ \\
TeaCache-15 & 7.50 & 7.25 & 6.81 & 2.6$\times$ \\
TeaCache-10 & 7.39 & 7.06 & 6.44 & 3.4$\times$ \\
\midrule
\multicolumn{5}{c}{\textit{+ FlowCast (Ours)}} \\
\midrule
FlowCast-30 & 7.53 & \textbf{7.49} & \textbf{6.95} & 2.3$\times$ \\
FlowCast-15 & 7.53 & 7.32 & 6.88 & 3.0$\times$ \\
FlowCast-10 & 7.42 & 7.09 & 6.50 & 3.6$\times$ \\
\midrule
\multicolumn{5}{c}{\textit{+ TeaCache + FlowCast}} \\
\midrule
TeaCache+FlowCast-30 & 7.52 & 7.41 & 6.90 & 2.4$\times$ \\
TeaCache+FlowCast-15 & 7.48 & 7.28 & 6.85 & 3.0$\times$ \\
TeaCache+FlowCast-10 & 7.35 & 6.98 & 6.35 & 3.7$\times$ \\
\bottomrule
\end{tabular}
\caption{\textbf{Quantitative comparison of acceleration strategies.} 
We report semantic consistency (G\_SC), perceptual quality (G\_PQ), overall score (G\_O), and speedup (Spd.). FlowCast consistently accelerates inference while maintaining quality and remains complementary to TeaCache.}
\label{tab:res_step}
\end{table}

\begin{table}[t]
\centering
\begin{tabular}{lccc}
\toprule
\textbf{Method} & \textbf{Latency (s)} & \textbf{Memory (GB)} & \textbf{Speedup} \\
\midrule
Full 50            & 33.8  & 48 & 1.00$\times$ \\
Speculative 50     & 15.5  & 79 & 2.40$\times$ \\
TeaCache           & 18.3  & 48 & 1.84$\times$ \\
\midrule
Full 25            & 17.5  & 48 & 2.00$\times$ \\
Speculative 25     & 10.2  & 71 & 4.20$\times$ \\
\midrule
Full 10            & 7.1   & 48 & 5.00$\times$ \\
Speculative 10     & 5.38  & 58 & 7.60$\times$ \\
\midrule
Full 5             & 3.9   & 48 & 10.00$\times$ \\
Speculative 5      & 3.1   & 52 & 12.80$\times$ \\
\bottomrule
\end{tabular}
\caption{Memory profiling for FLUX model.}
\end{table}

\begin{table}[t]
\centering
\begin{tabular}{lccc}
\toprule
\textbf{Method} & \textbf{Latency (s)} & \textbf{Memory (GB)} & \textbf{Speedup} \\
\midrule
Full 50            & 36.8  & 29 & 1$\times$ \\
Speculative 50     & 16.1  & 65 & 2.5$\times$ \\
TeaCache           & 18.3  & 29 & 1.8$\times$ \\
\midrule
Full 25            & 19.5  & 29 & 2$\times$ \\
Speculative 25     & 10.5  & 58 & 4.1$\times$ \\
\midrule
Full 10            & 7.4   & 29 & 5$\times$ \\
Speculative 10     & 5.59  & 46 & 7.8$\times$ \\
\midrule
Full 5             & 3.8   & 29 & 10$\times$ \\
Speculative 5      & 3.2   & 36 & 13.0$\times$ \\
\bottomrule
\end{tabular}
\caption{Memory profiling for BAGEL model.}
\end{table}

%% file: main.bbl
\begin{thebibliography}{39}
\providecommand{\natexlab}[1]{#1}
\providecommand{\url}[1]{\texttt{#1}}
\expandafter\ifx\csname urlstyle\endcsname\relax
  \providecommand{\doi}[1]{doi: #1}\else
  \providecommand{\doi}{doi: \begingroup \urlstyle{rm}\Url}\fi

\bibitem[Achiam et~al.(2023)Achiam, Adler, Agarwal, Ahmad, Akkaya, Aleman, Almeida, Altenschmidt, Altman, Anadkat, et~al.]{achiam2023gpt}
Josh Achiam, Steven Adler, Sandhini Agarwal, Lama Ahmad, Ilge Akkaya, Florencia~Leoni Aleman, Diogo Almeida, Janko Altenschmidt, Sam Altman, Shyamal Anadkat, et~al.
\newblock Gpt-4 technical report.
\newblock \emph{arXiv preprint arXiv:2303.08774}, 2023.

\bibitem[Bartosh et~al.(2024)Bartosh, Vetrov, and Andersson~Naesseth]{bartosh2024neural}
Grigory Bartosh, Dmitry~P Vetrov, and Christian Andersson~Naesseth.
\newblock Neural flow diffusion models: Learnable forward process for improved diffusion modelling.
\newblock \emph{Advances in Neural Information Processing Systems}, 37:\penalty0 73952--73985, 2024.

\bibitem[Bond-Taylor et~al.(2021)Bond-Taylor, Leach, Long, and Willcocks]{bond2021deep}
Sam Bond-Taylor, Adam Leach, Yang Long, and Chris~G Willcocks.
\newblock Deep generative modelling: A comparative review of vaes, gans, normalizing flows, energy-based and autoregressive models.
\newblock \emph{IEEE transactions on pattern analysis and machine intelligence}, 44\penalty0 (11):\penalty0 7327--7347, 2021.

\bibitem[Croitoru et~al.(2023)Croitoru, Hondru, Ionescu, and Shah]{croitoru2023diffusion}
Florinel-Alin Croitoru, Vlad Hondru, Radu~Tudor Ionescu, and Mubarak Shah.
\newblock Diffusion models in vision: A survey.
\newblock \emph{IEEE transactions on pattern analysis and machine intelligence}, 45\penalty0 (9):\penalty0 10850--10869, 2023.

\bibitem[Dao et~al.(2023)Dao, Phung, Nguyen, and Tran]{dao2023flow}
Quan Dao, Hao Phung, Binh Nguyen, and Anh Tran.
\newblock Flow matching in latent space.
\newblock \emph{arXiv preprint arXiv:2307.08698}, 2023.

\bibitem[Dao et~al.(2025)Dao, Phung, Dao, Metaxas, and Tran]{dao2025self}
Quan Dao, Hao Phung, Trung~Tuan Dao, Dimitris~N Metaxas, and Anh Tran.
\newblock Self-corrected flow distillation for consistent one-step and few-step image generation.
\newblock In \emph{Proceedings of the AAAI Conference on Artificial Intelligence}, volume~39, pp.\  2654--2662, 2025.

\bibitem[Deng et~al.(2025)Deng, Zhu, Li, Gou, Li, Wang, Zhong, Yu, Nie, Song, et~al.]{deng2025emerging}
Chaorui Deng, Deyao Zhu, Kunchang Li, Chenhui Gou, Feng Li, Zeyu Wang, Shu Zhong, Weihao Yu, Xiaonan Nie, Ziang Song, et~al.
\newblock Emerging properties in unified multimodal pretraining.
\newblock \emph{arXiv preprint arXiv:2505.14683}, 2025.

\bibitem[Deng et~al.(2024)Deng, He, Mei, Wang, and Tang]{deng2024fireflow}
Yingying Deng, Xiangyu He, Changwang Mei, Peisong Wang, and Fan Tang.
\newblock Fireflow: Fast inversion of rectified flow for image semantic editing.
\newblock \emph{arXiv preprint arXiv:2412.07517}, 2024.

\bibitem[Dhariwal \& Nichol(2021)Dhariwal and Nichol]{dhariwal2021diffusion}
Prafulla Dhariwal and Alexander Nichol.
\newblock Diffusion models beat gans on image synthesis.
\newblock \emph{Advances in neural information processing systems}, 34:\penalty0 8780--8794, 2021.

\bibitem[Ghosh et~al.(2023)Ghosh, Hajishirzi, and Schmidt]{ghosh2023geneval}
Dhruba Ghosh, Hannaneh Hajishirzi, and Ludwig Schmidt.
\newblock Geneval: An object-focused framework for evaluating text-to-image alignment.
\newblock \emph{Advances in Neural Information Processing Systems}, 36:\penalty0 52132--52152, 2023.

\bibitem[Haber et~al.(2025)Haber, Ahamed, Siddiqui, Zakariaei, and Eliasof]{haber2025iterative}
Eldad Haber, Shadab Ahamed, Md~Shahriar~Rahim Siddiqui, Niloufar Zakariaei, and Moshe Eliasof.
\newblock Iterative flow matching--path correction and gradual refinement for enhanced generative modeling.
\newblock \emph{arXiv preprint arXiv:2502.16445}, 2025.

\bibitem[He et~al.(2025)He, Yu, Liu, and Chen]{he2025flowtok}
Ju~He, Qihang Yu, Qihao Liu, and Liang-Chieh Chen.
\newblock Flowtok: Flowing seamlessly across text and image tokens.
\newblock \emph{arXiv preprint arXiv:2503.10772}, 2025.

\bibitem[Huang et~al.(2024)Huang, He, Yu, Zhang, Si, Jiang, Zhang, Wu, Jin, Chanpaisit, et~al.]{huang2024vbench}
Ziqi Huang, Yinan He, Jiashuo Yu, Fan Zhang, Chenyang Si, Yuming Jiang, Yuanhan Zhang, Tianxing Wu, Qingyang Jin, Nattapol Chanpaisit, et~al.
\newblock Vbench: Comprehensive benchmark suite for video generative models.
\newblock In \emph{Proceedings of the IEEE/CVF Conference on Computer Vision and Pattern Recognition}, pp.\  21807--21818, 2024.

\bibitem[Jin et~al.(2024)Jin, Sun, Li, Xu, Jiang, Zhuang, Huang, Song, Mu, and Lin]{jin2024pyramidal}
Yang Jin, Zhicheng Sun, Ningyuan Li, Kun Xu, Hao Jiang, Nan Zhuang, Quzhe Huang, Yang Song, Yadong Mu, and Zhouchen Lin.
\newblock Pyramidal flow matching for efficient video generative modeling.
\newblock \emph{arXiv preprint arXiv:2410.05954}, 2024.

\bibitem[Kong et~al.(2024)Kong, Tian, Zhang, Min, Dai, Zhou, Xiong, Li, Wu, Zhang, et~al.]{kong2024hunyuanvideo}
Weijie Kong, Qi~Tian, Zijian Zhang, Rox Min, Zuozhuo Dai, Jin Zhou, Jiangfeng Xiong, Xin Li, Bo~Wu, Jianwei Zhang, et~al.
\newblock Hunyuanvideo: A systematic framework for large video generative models.
\newblock \emph{arXiv preprint arXiv:2412.03603}, 2024.

\bibitem[Kornilov et~al.(2024)Kornilov, Mokrov, Gasnikov, and Korotin]{kornilov2024optimal}
Nikita Kornilov, Petr Mokrov, Alexander Gasnikov, and Aleksandr Korotin.
\newblock Optimal flow matching: Learning straight trajectories in just one step.
\newblock \emph{Advances in Neural Information Processing Systems}, 37:\penalty0 104180--104204, 2024.

\bibitem[Labs et~al.(2025)Labs, Batifol, Blattmann, Boesel, Consul, Diagne, Dockhorn, English, English, Esser, et~al.]{labs2025flux}
Black~Forest Labs, Stephen Batifol, Andreas Blattmann, Frederic Boesel, Saksham Consul, Cyril Diagne, Tim Dockhorn, Jack English, Zion English, Patrick Esser, et~al.
\newblock Flux. 1 kontext: Flow matching for in-context image generation and editing in latent space.
\newblock \emph{arXiv preprint arXiv:2506.15742}, 2025.

\bibitem[Lee et~al.(2023)Lee, Kim, and Ye]{lee2023minimizing}
Sangyun Lee, Beomsu Kim, and Jong~Chul Ye.
\newblock Minimizing trajectory curvature of ode-based generative models.
\newblock In \emph{International Conference on Machine Learning}, pp.\  18957--18973. PMLR, 2023.

\bibitem[Lipman et~al.(2022)Lipman, Chen, Ben-Hamu, Nickel, and Le]{lipman2022flow}
Yaron Lipman, Ricky~TQ Chen, Heli Ben-Hamu, Maximilian Nickel, and Matt Le.
\newblock Flow matching for generative modeling.
\newblock \emph{arXiv preprint arXiv:2210.02747}, 2022.

\bibitem[Liu et~al.(2025{\natexlab{a}})Liu, Zhang, Wang, Wei, Qiu, Zhao, Zhang, Ye, and Wan]{liu2025timestep}
Feng Liu, Shiwei Zhang, Xiaofeng Wang, Yujie Wei, Haonan Qiu, Yuzhong Zhao, Yingya Zhang, Qixiang Ye, and Fang Wan.
\newblock Timestep embedding tells: It's time to cache for video diffusion model.
\newblock In \emph{Proceedings of the Computer Vision and Pattern Recognition Conference}, pp.\  7353--7363, 2025{\natexlab{a}}.

\bibitem[Liu et~al.(2025{\natexlab{b}})Liu, Han, Xing, Yin, Wang, Cheng, Liao, Wang, Fu, Han, et~al.]{liu2025step1x}
Shiyu Liu, Yucheng Han, Peng Xing, Fukun Yin, Rui Wang, Wei Cheng, Jiaqi Liao, Yingming Wang, Honghao Fu, Chunrui Han, et~al.
\newblock Step1x-edit: A practical framework for general image editing.
\newblock \emph{arXiv preprint arXiv:2504.17761}, 2025{\natexlab{b}}.

\bibitem[Liu et~al.(2022)Liu, Gong, and Liu]{liu2022flow}
Xingchao Liu, Chengyue Gong, and Qiang Liu.
\newblock Flow straight and fast: Learning to generate and transfer data with rectified flow.
\newblock \emph{arXiv preprint arXiv:2209.03003}, 2022.

\bibitem[Liu et~al.(2023)Liu, Zhang, Ma, Peng, et~al.]{liu2023instaflow}
Xingchao Liu, Xiwen Zhang, Jianzhu Ma, Jian Peng, et~al.
\newblock Instaflow: One step is enough for high-quality diffusion-based text-to-image generation.
\newblock In \emph{The Twelfth International Conference on Learning Representations}, 2023.

\bibitem[Lu et~al.(2022)Lu, Zhou, Bao, Chen, Li, and Zhu]{lu2022dpm}
Cheng Lu, Yuhao Zhou, Fan Bao, Jianfei Chen, Chongxuan Li, and Jun Zhu.
\newblock Dpm-solver: A fast ode solver for diffusion probabilistic model sampling in around 10 steps.
\newblock \emph{Advances in neural information processing systems}, 35:\penalty0 5775--5787, 2022.

\bibitem[Luhman \& Luhman(2021)Luhman and Luhman]{luhman2021knowledge}
Eric Luhman and Troy Luhman.
\newblock Knowledge distillation in iterative generative models for improved sampling speed.
\newblock \emph{arXiv preprint arXiv:2101.02388}, 2021.

\bibitem[Salimans \& Ho(2022)Salimans and Ho]{salimans2022progressive}
Tim Salimans and Jonathan Ho.
\newblock Progressive distillation for fast sampling of diffusion models.
\newblock \emph{arXiv preprint arXiv:2202.00512}, 2022.

\bibitem[Schusterbauer et~al.(2025)Schusterbauer, Gui, Fundel, and Ommer]{schusterbauer2025diff2flow}
Johannes Schusterbauer, Ming Gui, Frank Fundel, and Bj{\"o}rn Ommer.
\newblock Diff2flow: Training flow matching models via diffusion model alignment.
\newblock In \emph{Proceedings of the Computer Vision and Pattern Recognition Conference}, pp.\  28347--28357, 2025.

\bibitem[Shaul et~al.(2023)Shaul, Perez, Chen, Thabet, Pumarola, and Lipman]{shaul2023bespoke}
Neta Shaul, Juan Perez, Ricky~TQ Chen, Ali Thabet, Albert Pumarola, and Yaron Lipman.
\newblock Bespoke solvers for generative flow models.
\newblock \emph{arXiv preprint arXiv:2310.19075}, 2023.

\bibitem[Song et~al.(2023)Song, Dhariwal, Chen, and Sutskever]{song2023consistency}
Yang Song, Prafulla Dhariwal, Mark Chen, and Ilya Sutskever.
\newblock Consistency models.
\newblock 2023.

\bibitem[Wan et~al.(2025)Wan, Wang, Ai, Wen, Mao, Xie, Chen, Yu, Zhao, Yang, et~al.]{wan2025wan}
Team Wan, Ang Wang, Baole Ai, Bin Wen, Chaojie Mao, Chen-Wei Xie, Di~Chen, Feiwu Yu, Haiming Zhao, Jianxiao Yang, et~al.
\newblock Wan: Open and advanced large-scale video generative models.
\newblock \emph{arXiv preprint arXiv:2503.20314}, 2025.

\bibitem[Wang et~al.(2024)Wang, Pu, Qi, Guo, Ma, Huang, Chen, Li, and Shan]{wang2024taming}
Jiangshan Wang, Junfu Pu, Zhongang Qi, Jiayi Guo, Yue Ma, Nisha Huang, Yuxin Chen, Xiu Li, and Ying Shan.
\newblock Taming rectified flow for inversion and editing.
\newblock \emph{arXiv preprint arXiv:2411.04746}, 2024.

\bibitem[Xia et~al.(2024)Xia, Yang, Dong, Wang, Li, Ge, Liu, Li, and Sui]{xia2024unlocking}
Heming Xia, Zhe Yang, Qingxiu Dong, Peiyi Wang, Yongqi Li, Tao Ge, Tianyu Liu, Wenjie Li, and Zhifang Sui.
\newblock Unlocking efficiency in large language model inference: A comprehensive survey of speculative decoding.
\newblock \emph{arXiv preprint arXiv:2401.07851}, 2024.

\bibitem[Xing et~al.(2024)Xing, Feng, Chen, Dai, Hu, Xu, Wu, and Jiang]{xing2024survey}
Zhen Xing, Qijun Feng, Haoran Chen, Qi~Dai, Han Hu, Hang Xu, Zuxuan Wu, and Yu-Gang Jiang.
\newblock A survey on video diffusion models.
\newblock \emph{ACM Computing Surveys}, 57\penalty0 (2):\penalty0 1--42, 2024.

\bibitem[Yan et~al.(2024)Yan, Liu, Pan, Liew, Liu, and Feng]{yan2024perflow}
Hanshu Yan, Xingchao Liu, Jiachun Pan, Jun~Hao Liew, Qiang Liu, and Jiashi Feng.
\newblock Perflow: Piecewise rectified flow as universal plug-and-play accelerator.
\newblock \emph{Advances in Neural Information Processing Systems}, 37:\penalty0 78630--78652, 2024.

\bibitem[Yang et~al.(2023)Yang, Zhang, Song, Hong, Xu, Zhao, Zhang, Cui, and Yang]{yang2023diffusion}
Ling Yang, Zhilong Zhang, Yang Song, Shenda Hong, Runsheng Xu, Yue Zhao, Wentao Zhang, Bin Cui, and Ming-Hsuan Yang.
\newblock Diffusion models: A comprehensive survey of methods and applications.
\newblock \emph{ACM computing surveys}, 56\penalty0 (4):\penalty0 1--39, 2023.

\bibitem[Yang et~al.(2024)Yang, Zhang, Zhang, Liu, Xu, Zhang, Meng, Ermon, and Cui]{yang2024consistency}
Ling Yang, Zixiang Zhang, Zhilong Zhang, Xingchao Liu, Minkai Xu, Wentao Zhang, Chenlin Meng, Stefano Ermon, and Bin Cui.
\newblock Consistency flow matching: Defining straight flows with velocity consistency.
\newblock \emph{arXiv preprint arXiv:2407.02398}, 2024.

\bibitem[Zhang \& Zhou(2025)Zhang and Zhou]{zhang2025inverse}
Yuchen Zhang and Jian Zhou.
\newblock Inverse flow and consistency models.
\newblock In \emph{Forty-second International Conference on Machine Learning}, 2025.

\bibitem[Zhu et~al.(2023)Zhu, Li, Wang, He, and Yao]{zhu2023conditional}
Yuanzhi Zhu, Zhaohai Li, Tianwei Wang, Mengchao He, and Cong Yao.
\newblock Conditional text image generation with diffusion models.
\newblock In \emph{Proceedings of the IEEE/CVF Conference on Computer Vision and Pattern Recognition}, pp.\  14235--14245, 2023.

\bibitem[Zwick et~al.(2025)Zwick, Friederich, Beichter, Hilbert, Mikut, and Bringmann]{zwick2025lediflow}
Pascal Zwick, Nils Friederich, Maximilian Beichter, Lennart Hilbert, Ralf Mikut, and Oliver Bringmann.
\newblock Lediflow: Learned distribution-guided flow matching to accelerate image generation.
\newblock \emph{arXiv preprint arXiv:2505.20723}, 2025.

\end{thebibliography}
